\documentclass{article}

 \usepackage[preprint]{neurips_2026}

\usepackage[utf8]{inputenc} 
\usepackage[T1]{fontenc}    
\usepackage{hyperref}       
\usepackage{url}            
\usepackage{booktabs}       
\usepackage{amsfonts}       
\usepackage{nicefrac}       
\usepackage{microtype}      
\usepackage{xcolor}         
\usepackage{siunitx}

\usepackage{multirow}
\usepackage[table]{xcolor}
\usepackage{colortbl}
\usepackage{arydshln}
\usepackage{wrapfig}

\usepackage{amsmath}
\usepackage{amssymb}
\usepackage{mathtools}
\usepackage{amsthm}
\usepackage{algorithm}
\usepackage{algpseudocode}

\usepackage{graphicx}
\usepackage{subcaption}
\usepackage{caption}

\usepackage{xcolor}  
\usepackage[dvipsnames, svgnames, x11names]{xcolor}  
\definecolor{inkblue}{RGB}{0, 51, 102}  

\definecolor{identityrow}{RGB}{234, 239, 247}
\definecolor{saliencyrow}{RGB}{210, 222, 239}
\usepackage{float}

\usepackage[table]{xcolor}
\definecolor{myblue}{RGB}{220,235,255}

\usepackage[utf8]{inputenc} 
\usepackage[T1]{fontenc}    
\usepackage{hyperref}       
\usepackage{cleveref}
\usepackage{url}            
\usepackage{amsfonts}       
\usepackage{nicefrac}       
\usepackage{microtype}      
\usepackage{xcolor}         

\usepackage{float}
\usepackage{algorithm}
\usepackage{algpseudocode}

\floatstyle{ruled}
\restylefloat{algorithm}

\theoremstyle{plain}
\newtheorem{theorem}{Theorem}[section]
\newtheorem{proposition}[theorem]{Proposition}

\theoremstyle{definition}

\theoremstyle{remark}

\usepackage{float}
\DeclareMathOperator*{\argmin}{argmin}
\DeclareMathOperator*{\argmax}{argmax}

\newenvironment{talign*}
{\let\displaystyle\textstyle\csname align*\endcsname}
{\endalign}

\let\oldthanks\thanks
\renewcommand{\thanks}[1]{%
    \bgroup
    \hypersetup{pdfborder={0 0 0}, linkbordercolor=white}%
    \oldthanks{#1}%
    \egroup%
}

\usepackage[most]{tcolorbox}

\newtcolorbox{insightbox}[1][Insight]{
    enhanced,
    breakable,
    colback=gray!5,
    colframe=gray!60,
    title=\textbf{#1},
    fonttitle=\bfseries,
    arc=1.5mm,
    boxrule=0.6pt,
    left=1.2mm,
    right=1.2mm,
    top=1mm,
    bottom=1mm
}

\newtcolorbox{examplebox}[1][Example]{
    enhanced,
    breakable,
    colback=blue!3,
    colframe=blue!45!black,
    title=\textbf{#1},
    fonttitle=\bfseries,
    arc=1.5mm,
    boxrule=0.6pt,
    left=1.2mm,
    right=1.2mm,
    top=1mm,
    bottom=1mm
}

\usepackage{array}
\usepackage{tabularx}
\usepackage[table]{xcolor}
\usepackage[most]{tcolorbox}

\definecolor{headergray}{HTML}{4A4A4A}
\definecolor{questiongray}{HTML}{F1F1F1}
\definecolor{goodrow}{HTML}{DFF9E5}
\definecolor{goodborder}{HTML}{20C95A}
\definecolor{goodtext}{HTML}{009A22}
\definecolor{badtext}{HTML}{D00000}
\definecolor{warntext}{HTML}{9A5A00}

\newtcolorbox{caseframe}{
  enhanced,
  width=0.98\linewidth,
  colback=white,
  colframe=black!45,
  boxrule=0.55pt,
  arc=3pt,
  left=7pt,
  right=7pt,
  top=7pt,
  bottom=7pt,
  boxsep=0pt,
  before skip=5pt,
  after skip=5pt
}

\newtcolorbox{questionbox}{
  enhanced,
  width=\linewidth,
  colback=questiongray,
  colframe=questiongray,
  boxrule=0pt,
  arc=0pt,
  left=6pt,
  right=6pt,
  top=5pt,
  bottom=5pt,
  boxsep=0pt
}

\newtcolorbox{analysisbox}{
  enhanced,
  width=\linewidth,
  colback=green!3,
  colframe=goodborder,
  boxrule=0.65pt,
  arc=3pt,
  left=7pt,
  right=7pt,
  top=9pt,
  bottom=7pt,
  title=\textbf{Analysis},
  coltitle=goodtext,
  colbacktitle=green!12,
  fonttitle=\bfseries,
  boxed title style={
    colframe=green!12,
    boxrule=0pt,
    arc=0pt,
    left=6pt,
    right=6pt,
    top=2pt,
    bottom=2pt
  },
  attach boxed title to top left={xshift=7pt,yshift=-2pt}
}

\newcommand{\goodcell}[1]{\cellcolor{goodrow}#1}
\newcommand{\bad}[1]{\textcolor{badtext}{\bfseries #1}}
\newcommand{\goodresult}[1]{\textcolor{goodtext}{\bfseries $\checkmark$ #1}}
\newcommand{\badresult}[1]{\textcolor{badtext}{\bfseries $\times$ #1}}

\title{SR-OPSD: Self-Referenced On-Policy Self-Distillation}

\usepackage{amsmath,amssymb}
\usepackage{array}
\usepackage{tabularx}

\usepackage[table]{xcolor}
\usepackage[most]{tcolorbox}

\usepackage{microtype}

\definecolor{headergray}{HTML}{4A4A4A}
\definecolor{questiongray}{HTML}{F1F1F1}
\definecolor{goodrow}{HTML}{DFF9E5}
\definecolor{goodborder}{HTML}{20C95A}
\definecolor{goodtext}{HTML}{009A22}
\definecolor{badtext}{HTML}{D00000}
\definecolor{warntext}{HTML}{9A5A00}

\definecolor{framegray}{HTML}{444444}
\definecolor{textgray}{HTML}{666666}
\definecolor{goodgreen}{HTML}{00B950}
\definecolor{badred}{HTML}{FF0050}

\newtcolorbox{outerframe}{
  enhanced,
  width=0.98\linewidth,
  colback=white,
  colframe=framegray,
  boxrule=0.75pt,
  arc=0.5pt,
  left=14pt,
  right=14pt,
  top=13pt,
  bottom=13pt,
  boxsep=0pt
}

\definecolor{caseNavy}{HTML}{243B53}
\definecolor{caseBlue}{HTML}{486581}
\definecolor{caseInk}{HTML}{273444}
\definecolor{caseMuted}{HTML}{68798B}
\definecolor{caseLine}{HTML}{D8E0E8}
\definecolor{casePrompt}{HTML}{F4F7FA}
\definecolor{caseDomain}{HTML}{E9EEF5}
\definecolor{caseMint}{HTML}{EAF6F1}
\definecolor{caseMintText}{HTML}{23745D}
\definecolor{caseAmber}{HTML}{FFF5E3}
\definecolor{caseAmberText}{HTML}{9A6518}
\definecolor{caseRose}{HTML}{FCEDEC}
\definecolor{caseRoseText}{HTML}{A43A35}

\newtcolorbox{caseeditorial}{
  enhanced,
  width=\linewidth,
  colback=white,
  colframe=white,
  boxrule=0pt,
  arc=0pt,
  left=0pt,
  right=0pt,
  top=0pt,
  bottom=0pt,
  boxsep=0pt,
  before skip=6pt plus 0pt minus 0pt,
  after skip=7pt plus 0pt minus 0pt
}

\newcommand{\caseheading}[3]{%
  \noindent
  \begin{tabularx}{\linewidth}{@{}>{\raggedright\arraybackslash}p{0.12\linewidth}>{\raggedright\arraybackslash}X>{\raggedleft\arraybackslash}p{0.15\linewidth}@{}}
    \tcbox[
      on line,
      colback=caseNavy,
      colframe=caseNavy,
      boxrule=0pt,
      arc=1.5pt,
      left=4pt,right=4pt,top=1.5pt,bottom=1.5pt,
      boxsep=0pt
    ]{\color{white}\bfseries\scriptsize CASE #1}
    & {\color{caseInk}\bfseries\footnotesize #2}
    & \tcbox[
      on line,
      colback=caseDomain,
      colframe=caseDomain,
      boxrule=0pt,
      arc=5pt,
      left=5pt,right=5pt,top=1pt,bottom=1pt,
      boxsep=0pt
    ]{\color{caseBlue}\itshape\scriptsize #3}
  \end{tabularx}
  \vspace{2pt}
  {\color{caseLine}\hrule height 0.55pt}
  \vspace{3pt}
}

\newtcolorbox{caseprompt}{
  enhanced,
  width=\linewidth,
  colback=casePrompt,
  colframe=casePrompt,
  boxrule=0pt,
  arc=1pt,
  borderline west={1.3pt}{0pt}{caseBlue},
  left=5pt,
  right=5pt,
  top=3pt,
  bottom=3pt,
  boxsep=0pt,
  fontupper=\footnotesize,
  before skip=0pt,
  after skip=3pt
}

\NewDocumentEnvironment{casecompare}{+b}{%
  \begingroup
  \footnotesize
  \setlength{\tabcolsep}{4pt}%
  \renewcommand{\arraystretch}{1.08}%
  \begin{tabularx}{\linewidth}{@{}>{\raggedright\arraybackslash}p{0.18\linewidth}>{\raggedright\arraybackslash}X>{\raggedright\arraybackslash}p{0.17\linewidth}@{}}
  #1
  \end{tabularx}
  \endgroup
}{}

\newcommand{\casegood}[3]{%
  \rowcolor{caseMint}
  {\color{caseMintText}\bfseries #1} & #2 &
  {\color{caseMintText}\bfseries $\checkmark$\ #3}\\[-0.5pt]
  \addlinespace[1.2pt]
}

\newcommand{\casewarn}[3]{%
  \rowcolor{caseAmber}
  {\color{caseAmberText}\bfseries #1} & #2 &
  {\color{caseAmberText}\bfseries $\times$\ #3}\\[-0.5pt]
  \addlinespace[1.2pt]
}

\newcommand{\casefail}[3]{%
  \rowcolor{caseRose}
  {\color{caseRoseText}\bfseries #1} & #2 &
  {\color{caseRoseText}\bfseries $\times$\ #3}\\[-0.5pt]
}

\newtcolorbox{casefinding}{
  enhanced,
  width=\linewidth,
  colback=white,
  colframe=white,
  boxrule=0pt,
  arc=0pt,
  borderline west={1.3pt}{0pt}{caseMintText},
  left=5pt,
  right=2pt,
  top=1.5pt,
  bottom=1pt,
  boxsep=0pt,
  fontupper=\footnotesize\color{caseMuted},
  before upper={\textcolor{caseMintText}{\bfseries Finding.}\ },
  before skip=2pt,
  after skip=0pt
}

\newcommand{\qkey}[1]{\textcolor{caseNavy}{\bfseries #1}}
\newcommand{\answerline}[1]{\par\textbf{Answer:} #1}

\RenewDocumentEnvironment{casecompare}{+b}{%
  \begingroup
  \scriptsize
  \setlength{\tabcolsep}{3pt}%
  \renewcommand{\arraystretch}{1.03}%
  \begin{tabularx}{\linewidth}{@{}>{\raggedright\arraybackslash}p{0.17\linewidth}>{\raggedright\arraybackslash}X>{\raggedright\arraybackslash}p{0.18\linewidth}@{}}
    \toprule[0.7pt]
    \textbf{Method} & \textbf{Response Excerpt} & \textbf{Quality}\\
    \midrule[0.4pt]
    #1
    \bottomrule[0.7pt]
  \end{tabularx}
  \endgroup
}{}

\renewcommand{\casegood}[3]{%
  \goodcell{#1} & \goodcell{#2} & \goodcell{\goodresult{#3}}\\
  \midrule[0.3pt]
}

\renewcommand{\casewarn}[3]{%
  #1 & #2 & \badresult{#3}\\
  \midrule[0.3pt]
}

\renewcommand{\casefail}[3]{%
  #1 & #2 & \badresult{#3}\\
}

\author{
\textbf{Zhuo Sun}$^{1,2}$\thanks{Equal Contribution.} ~\thanks{Corresponding Author. Correspondence to \url{zhuosunreid@outlook.com}, ~\url{habhuz@dtu.dk}, ~ \url{li.zeng@pku.edu.cn}.} \quad Entong Li$^{3,*}$ \quad Yanlong Zhao$^{4,*}$ \quad \textbf{Xiaoyuan Cheng}$^{5,*}$\\
\textbf{Wenxuan Yuan}$^{6}$ ~~\textbf{Kaiyu Li}$^{5}$ ~~ \textbf{Che Liu}$^{2}$ ~~ \textbf{Huihang Liu}$^{1}$ \\
\textbf{Harrison Bo Hua Zhu}$^{7,2,8,\dagger}$  ~~ \textbf{Li Zeng}$^{9,\dagger}$\\[0.2cm]
$^{1}$Shanghai University of Finance and Economics, $^{2}$Imperial College London,\\
$^{3}$Independent Researcher, $^{4}$University of Science and Technology of China, \\
$^{5}$University College London,
$^{6}$Nanyang Technological University,\\
$^{7}$Technical University of Denmark, $^{8}$University of Copenhagen, \\
$^{9}$Peking University\\[0.1cm]
}

\begin{document}

\maketitle

\begin{abstract}
On-policy self-distillation (OPSD) converts feedback into dense token-level supervision on trajectories generated by the policy to be optimized, providing a useful complement to reinforcement learning with sparse outcome rewards. However, the self-teacher policy used in OPSD is typically a stop-gradient or exponential-moving-average copy of the policy conditioned on additional context information, and thus co-evolves with both the student policy and its on-policy context distribution. Directly matching such a moving target with a fixed projection objective can lead to unstable optimization or excessive distributional concentration. This nature of OPSD motivates the proposed \emph{Self-Referenced On-Policy Self-Distillation (SR-OPSD)}. At fixed student-generated contexts, a token-level variational characterization identifies the effective distillation target as a geometric interpolation between the self-teacher policy and a reference policy. Meanwhile, we use the R\'enyi divergence family to generalize the projection geometry. This formulation separates \emph{where} the adaptive target is placed from \emph{how} the student is projected toward it: the interpolation coefficient controls underlying target, while the R\'enyi order controls the projection geometry and its sensitivity to token-level density ratios. Extensive experiments across scientific evaluation, mathematical reasoning, and coding generation tasks with multiple large language models show that SR-OPSD achieves the state-of-the-art or competitive performance across various settings.
\end{abstract}

\section{Introduction}

Post-training of large language models (LLMs) encompasses supervised instruction tuning,
preference optimization, and reinforcement learning
\citep{wei2022finetuned,sanh2022multitask,
NEURIPS2022_b1efde53,rafailov2023direct}.
Among reinforcement-learning methods, RLHF methods
commonly optimize a learned reward model using proximal policy
optimization (PPO)
\citep{schulman2017proximal,NEURIPS2022_b1efde53},
whereas recent reasoning systems increasingly employ reinforcement learning
with verifiable rewards (RLVR), using algorithms such as Group Relative
Policy Optimization (GRPO) \citep{shao2024deepseekmath} and its variants \citep{yu2025dapo}. Methods such as Direct Preference
Optimization (DPO) \citep{rafailov2023direct} optimize preference data without explicit reward modeling. In terms of reasoning, RLVR is particularly attractive because the correctness of the outcome can often be assessed automatically through external checks,
unit tests, or rule-based verifiers. However, these methods typically assigns a scalar-valued reward to an entire response. Although this reward induces
policy-gradient updates over the generated tokens, it does not directly
identify which intermediate tokens or reasoning steps caused success or
failure, resulting in coarse and sparse credit assignment over long
sequences.

On-policy distillation (OPD) offers an alternative approach by utilizing the dense
token-level supervision on trajectories sampled from the student
policy \citep{gu2023minillm,agarwal2023onpolicy, zhu2026manyfaces, jin2026entropyaware}. A teacher policy evaluates
student-generated prefixes, reducing the
state-distribution mismatch associated with offline distillation. Existing on-policy distillation methods can be viewed through three
interacting design choices including: the divergence objective, the source of supervision, and the mechanism used to stabilize policy optimization. These choices are often designed largely in isolation, which is relatively
benign in conventional offline distillation where both the teacher and the
training distribution are fixed. This benefit,
however, generally requires continued access to an extra teacher policy. 
On-policy self-distillation (OPSD) instead uses two views of the same evolving
model: the policy we aim to optimize is conditioned on the original prompt, whereas the so-called self-teacher policy is the same policy conditioned on feedback, privileged information, or richer contextual signals \citep{zhao2026selfdistilledreasoner,hubotter2026reinforcement}.
OPSD can therefore convert feedback, reflections, verified traces, or successful responses into dense token-level supervision.

\begin{figure}
    \centering
    \includegraphics[width=0.99\linewidth]{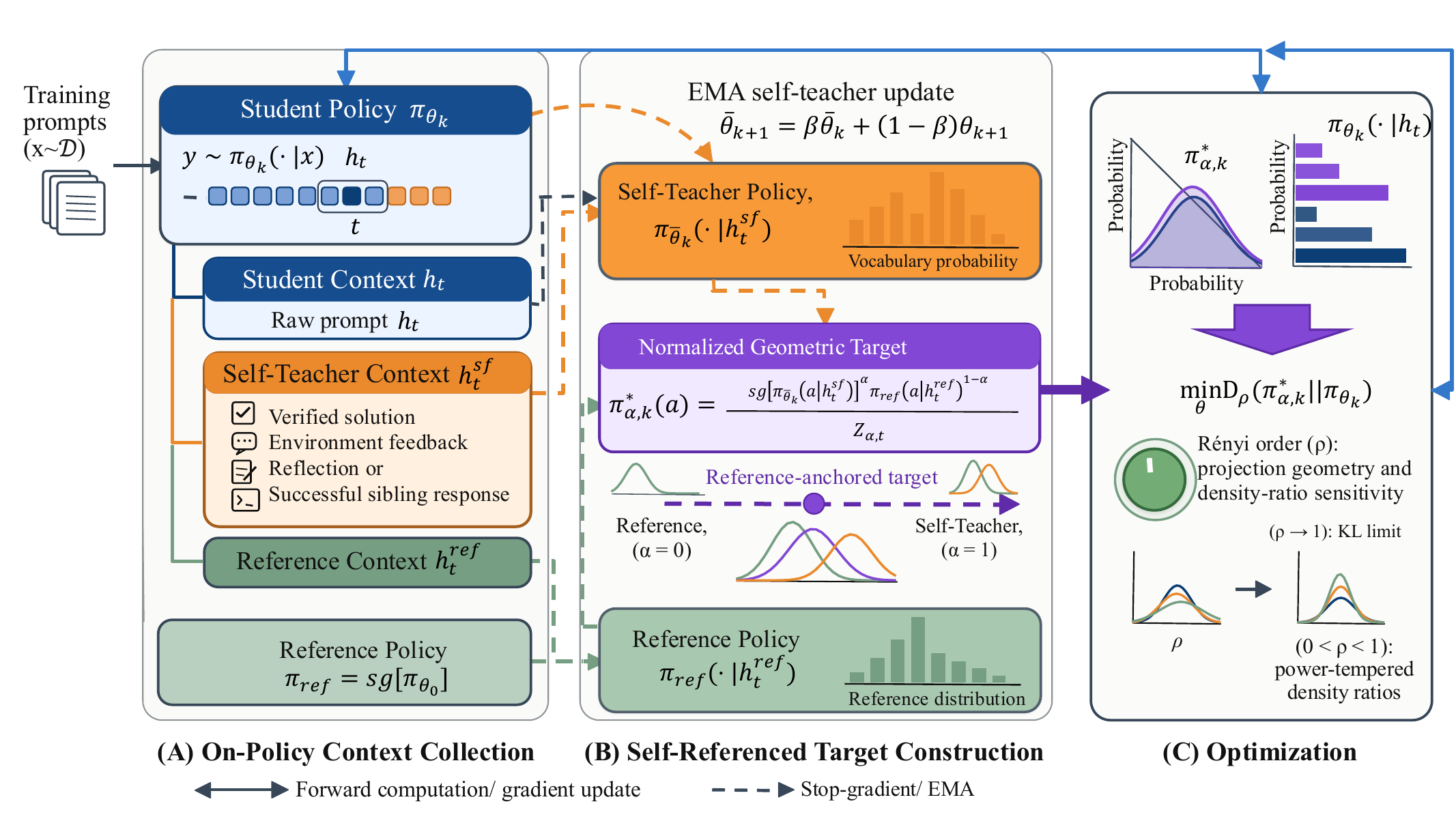}
    \caption{Illustration of SR-OPSD.}
    \label{fig: illustration_of_method}
\end{figure}

\paragraph{Motivation}
However, in on-policy self-distillation,
the policy to be optimized generates its own contexts while a self-teacher policy evolves
together with it. Consequently, the target, the
context distribution, and the policy being optimized all change throughout training, making these three design choices intrinsically coupled. It is sensitive to the construction of the self-teacher policy, the choice of divergence, the extra context information provided to the self-teacher, and top-$K$ approximation as discussed in \citep{zhu2026manyfaces}. For instance, self-distillation with reverse Kullback–Leibler (KL) divergence may amplify mode-seeking behavior and reduce output diversity \citep{jin2026entropyaware}; as demonstrated in \Cref{fig: illustration_SR-OPSD_results} and in Appendix~\ref{app:qualitative-generation}. This coupling is already evident in the OPSD methods based on the two predominant KL objectives. Forward KL~\citep{zhao2026selfdistilledreasoner} is mode-covering and takes
its action expectation under the self-teacher distribution, whereas reverse
KL~\citep{zhao2026selfdistilledreasoner,hubotter2026reinforcement} is
naturally aligned with student sampling but is generally more mode-seeking. Hence, in on-policy self-distillation, forward KL may suffer from an
action-level expectation mismatch when estimated from student-sampled tokens,
while reverse KL can weakly correct modes already underrepresented by the
student, potentially reinforcing such biases as the self-teacher co-evolves. Existing approaches mitigate parts of these issues through exponential-moving-average (EMA) self-teachers,
symmetric divergences~\citep{hubotter2026reinforcement}, entropy-aware
switching~\citep{jin2026entropyaware}, and regularization-based
stabilization~\citep{luo2026demystifying, yang2026learningbeyondteacher,yu2026preferencebased}.

These motivate an on-policy self-distillation learning objective beyond vanilla forward-KL, reverse-KL. We instead seek a formulation that
separately controls \emph{where} the adaptive target is placed and \emph{how}
the student is projected toward it. We therefore propose \emph{Self-Referenced On-Policy Self-Distillation (SR-OPSD)}, illustrated in \Cref{fig: illustration_of_method}, which offers a new framework with R\'enyi projection and
reference-policy anchoring for on-policy self-distillation.

\paragraph{Main Contributions}
Our main contributions are threefold.
(i) We adapt the reference-regularized on-policy self-distillation formulation to frozen-context self-distillation and construct a normalized target from a self-teacher policy and a reference policy.
(ii) We optimize this target using R\'enyi divergence and derive
the resulting logit gradient, showing that R\'enyi order controls the power
weighting of self-teacher-to-student probability ratios without changing the unconstrained target distribution.
(iii) We evaluate the resulting method across scientific reasoning,
mathematical reasoning, and code generation, and open-source the code which supports multiple hardware platforms, including NVIDIA GPUs.

\section{Background}
\label{sec: background}

\paragraph{Notation}
We denote the policy to be optimized by $\pi_\theta$, the self-teacher policy by
$\pi_{\bar\theta}$, and the reference policy by
$\pi_{\mathrm{ref}}$. Given a prompt $x\sim\mathcal D$ and an on-policy
response $y=(y_1,\ldots,y_L)$, the student prefix at position $t$ is
$h_t=(x,y_{<t})$. We write $h_t^{\mathrm{sf}}$ for the corresponding
self-feedback or privileged context supplied to the self-teacher and
$h_t^{\mathrm{ref}}$ for the context supplied to the reference policy. When a
statement is token-level and the contexts are fixed, we suppress the position
index and write $h$, $h_{\mathrm{sf}}$, and $h_{\mathrm{ref}}$. Thus,
$h$, $h_{\mathrm{sf}}$, and $h_{\mathrm{ref}}$ always denote three possibly
different fixed contexts. All
token-level distributions are defined over the same next-token vocabulary
$\mathcal V$. If a candidate set $\mathcal C_t\subseteq\mathcal V$ is used,
all compared distributions are normalized on the same candidate space,
preferably with a tail bucket retaining the omitted probability mass. The
operator $\operatorname{sg}[\cdot]$ denotes stop-gradient.

\paragraph{R\'enyi Divergence}
For probability distributions $p$ and $q$ with common positive support, the
R\'enyi divergence \citep{van2014renyi} of order $\rho>0$, $\rho\neq1$, is
\[
D_\rho(p\|q)
=
\frac{1}{\rho-1}
\log\sum_{a\in\mathcal V}p(a)^\rho q(a)^{1-\rho}.
\]
The divergence direction is determined solely by the argument order. We call $D_\rho(q\|p)$ the reverse projection when $q$ is the student and $p$
is the self-teacher target, and $D_\rho(p\|q)$ the forward projection. Both
converge to the corresponding KL as $\rho\to1$.

\paragraph{On-Policy Self-Distillation}
Given a teacher policy $\pi_T$, on-policy distillation trains the
student on its own sampled prefixes, for example through the reverse-KL loss
\[
\mathbb E_{\substack{x\sim\mathcal D\\
y\sim\pi_{\theta_{\mathrm{old}}}(\cdot\mid x)}}
\left[
\sum_{t=1}^{L}
D_{\mathrm{KL}}\!\left(
\pi_\theta(\cdot\mid h_t)
\,\middle\|\,
\operatorname{sg}\!\left[\pi_T(\cdot\mid h_t)\right]
\right)
\right].
\]
Here $\pi_{\theta_{\mathrm{old}}}$ denotes the rollout policy used to collect
the current on-policy batch; it coincides with the current student before the
optimization step. In on-policy self-distillation, the teacher policy is often replaced by an exponential-moving-average of $\pi_\theta$ with additional context information,
$\operatorname{sg}[\pi_{\bar\theta}(\cdot\mid h_t^{\mathrm{sf}})]$. The
additional context $h_t^{\mathrm{sf}}$ can contain textual feedback, verified
solutions, reflections, or successful sibling responses that are unavailable in the ordinary student context $h_t$.

\paragraph{Self-Teacher in OPSD}
The self-teacher policy $\pi_{\bar\theta}$ may be a stopped copy, a lagged snapshot, or an
exponential-moving-average version of the policy to be optimized $\pi_{\theta}$. We use $\beta\in[0,1)$ as the exponential moving average coefficient and adopt the convention
\[
\bar\theta_{k+1}
=
\beta\bar\theta_k+(1-\beta)\theta_{k+1}.
\]
Thus, larger $\beta$ produces a slower-moving self-teacher policy. During each update, $\bar\theta_k$ and the resulting self-teacher policy are held
fixed \citep{hubotter2026reinforcement}.

\section{Method}
\label{sec: method}

\paragraph{Revisiting On-Policy Self-Distillation}
At each outer iteration, OPSD first samples a batch from the rollout policy
$\pi_{\theta_{\mathrm{old}}}$ and then treats the resulting contexts as fixed
while updating the student. A standard reverse-KL frozen-rollout objective is
\begin{align*}
\label{eq: obj of opsd}
\max_{\theta}\;
\mathbb E_{\substack{x\sim\mathcal D\\
y\sim\pi_{\theta_{\mathrm{old}}}(\cdot\mid x)}}
\left[
\sum_{t=1}^{L}
-
D_{\mathrm{KL}}\!\left(
\pi_\theta(\cdot\mid h_t)
\,\middle\|\,
\operatorname{sg}\!\left[
\pi_{\bar\theta}(\cdot\mid h_t^{\mathrm{sf}})
\right]
\right)
\right].
\end{align*}
The distinction between the sequence-level contexts
$(h_t,h_t^{\mathrm{sf}},h_t^{\mathrm{ref}})$ and the fixed-context shorthand
$(h,h_{\mathrm{sf}},h_{\mathrm{ref}})$ is important: the following
characterization is pointwise in fixed contexts and therefore applies directly
to the frozen-rollout inner update.

\begin{proposition}[Token-Level Variational Characterization of OPSD]
\label{prop:opsd_kl_regularized_equivalence}
Fix $h$, $h_{\mathrm{sf}}$, and $h_{\mathrm{ref}}$, and let
$\alpha\in[0,1]$. Assume that, for every token $a\in\mathcal V$, $\pi_\theta(a\mid h)>0,
\qquad
\operatorname{sg}\!\left[
\pi_{\bar\theta}(a\mid h_{\mathrm{sf}})
\right]>0,
\qquad
\pi_{\mathrm{ref}}(a\mid h_{\mathrm{ref}})>0$.
Define $Z_\alpha(h_{\mathrm{sf}},h_{\mathrm{ref}})
:=
\sum_{a\in\mathcal V}
\operatorname{sg}\!\left[
\pi_{\bar\theta}(a\mid h_{\mathrm{sf}})
\right]^\alpha
\pi_{\mathrm{ref}}(a\mid h_{\mathrm{ref}})^{1-\alpha}$
and $\pi_\alpha^\star(a\mid h_{\mathrm{sf}},h_{\mathrm{ref}})
:=
\frac{
\operatorname{sg}\!\left[
\pi_{\bar\theta}(a\mid h_{\mathrm{sf}})
\right]^\alpha
\pi_{\mathrm{ref}}(a\mid h_{\mathrm{ref}})^{1-\alpha}
}{
Z_\alpha(h_{\mathrm{sf}},h_{\mathrm{ref}})
}$. Then the conditional token-level objective
\begin{align*}
\mathcal J_\alpha
(\theta;h,h_{\mathrm{sf}},h_{\mathrm{ref}})
:=
\mathbb E_{a\sim\pi_\theta(\cdot\mid h)}
\left[
\alpha\log
\frac{
\operatorname{sg}\!\left[
\pi_{\bar\theta}(a\mid h_{\mathrm{sf}})
\right]
}{
\pi_{\mathrm{ref}}(a\mid h_{\mathrm{ref}})
}
\right]
-
D_{\mathrm{KL}}\!\left(
\pi_\theta(\cdot\mid h)
\,\middle\|\,
\pi_{\mathrm{ref}}(\cdot\mid h_{\mathrm{ref}})
\right)
\end{align*}
satisfies $\mathcal J_\alpha
(\theta;h,h_{\mathrm{sf}},h_{\mathrm{ref}})
=
-
D_{\mathrm{KL}}\!\left(
\pi_\theta(\cdot\mid h)
\,\middle\|\,
\pi_\alpha^\star(\cdot\mid h_{\mathrm{sf}},h_{\mathrm{ref}})
\right)+
\log Z_\alpha(h_{\mathrm{sf}},h_{\mathrm{ref}})$.
Consequently, for fixed contexts and frozen target components, maximizing
$\mathcal J_\alpha$ with respect to $\theta$ is equivalent to minimizing the
reverse KL from the student to $\pi_\alpha^\star$. When $\alpha=1$,
$Z_1=1$ and
$\pi_1^\star(\cdot\mid h_{\mathrm{sf}},h_{\mathrm{ref}})
=
\operatorname{sg}[\pi_{\bar\theta}(\cdot\mid h_{\mathrm{sf}})]$, recovering
the token-level objective.
\end{proposition}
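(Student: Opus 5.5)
The plan is a direct algebraic computation: rewrite the log-ratio inside $\mathcal J_\alpha$ in terms of $\log\pi_\alpha^\star$, and the identity follows from collecting terms. Abbreviate $q(a):=\pi_\theta(a\mid h)$, $t(a):=\operatorname{sg}[\pi_{\bar\theta}(a\mid h_{\mathrm{sf}})]$ and $r(a):=\pi_{\mathrm{ref}}(a\mid h_{\mathrm{ref}})$. All three are strictly positive on the finite vocabulary $\mathcal V$. Hence every logarithm below is finite, $Z_\alpha>0$, and $\pi_\alpha^\star$ is a well-defined probability distribution with full support.

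\textbf{Step 1.} From the definition of $\pi_\alpha^\star$, taking logs gives, for every $a$,
\[
\log\pi_\alpha^\star(a)=\alpha\log t(a)+(1-\alpha)\log r(a)-\log Z_\alpha .
\]
Rearranging,
\[
\alpha\log\frac{t(a)}{r(a)}=\log\frac{\pi_\alpha^\star(a)}{r(a)}+\log Z_\alpha .
\]

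\textbf{Step 2.} Substitute this into $\mathcal J_\alpha$ and write $D_{\mathrm{KL}}(q\|r)=\sum_a q(a)\log\frac{q(a)}{r(a)}$. Then
\[
\mathcal J_\alpha=\sum_a q(a)\Bigl[\log\frac{\pi_\alpha^\star(a)}{r(a)}-\log\frac{q(a)}{r(a)}\Bigr]+\log Z_\alpha\sum_a q(a).
\]
The $\log r(a)$ terms cancel, and $\sum_a q(a)=1$. This leaves $-\sum_a q(a)\log\frac{q(a)}{\pi_\alpha^\star(a)}+\log Z_\alpha$, which is exactly $-D_{\mathrm{KL}}(q\|\pi_\alpha^\star)+\log Z_\alpha$. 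All sums are finite, so exchanging expectation and summation is unproblematic.

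\textbf{Step 3 (the consequence).} Because of the stop-gradient, $t$ does not depend on $\theta$. Since $r$ and the contexts are fixed, $Z_\alpha$ does not depend on $\theta$ either. So $\log Z_\alpha$ is an additive constant in $\theta$. Maximizing $\mathcal J_\alpha$ is therefore equivalent to minimizing $D_{\mathrm{KL}}(\pi_\theta(\cdot\mid h)\|\pi_\alpha^\star)$: the two objectives have the same maximizers and the same $\theta$-gradients.

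\textbf{Step 4 ($\alpha=1$).} Setting $\alpha=1$ gives $Z_1=\sum_a t(a)=1$ and $\pi_1^\star=t$. The objective then becomes $-D_{\mathrm{KL}}(\pi_\theta\|\operatorname{sg}[\pi_{\bar\theta}])$, the per-token summand of the OPSD objective. This can also be checked directly from the definition of $\mathcal J_1$: the $\log r$ terms cancel.

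There is no real obstacle here. The only points needing care are the bookkeeping of which quantities are $\theta$-independent, which is where the stop-gradient is used, and the positivity assumptions, which guarantee that all logs are finite and that $\pi_\alpha^\star$ is normalized. The endpoint $\alpha=0$ needs no special treatment: it gives $\pi_0^\star=r$ and $Z_0=1$.
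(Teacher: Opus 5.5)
Your proof is correct and follows essentially the same route as the paper's. You substitute $\log\pi_\alpha^\star=\alpha\log t+(1-\alpha)\log r-\log Z_\alpha$ into the objective, use $\sum_a q(a)=1$ to isolate $\log Z_\alpha$, note that $Z_\alpha$ is $\theta$-independent, and treat $\alpha=1$ via $Z_1=1$. The only differences are cosmetic: you first rewrite $\alpha\log(t/r)$ as $\log(\pi_\alpha^\star/r)+\log Z_\alpha$, whereas the paper collects everything into $-\log q+\alpha\log p+(1-\alpha)\log r$, and your $\alpha=0$ remark is a harmless extra.
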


The proposition establishes a conditional variational equivalence, rather than an
unconditional equivalence between complete moving-distribution objectives. Although $Z_\alpha(h_{\mathrm{sf}},h_{\mathrm{ref}})$ is constant in a fixed
inner update, its expectation under a context distribution generated by a
changing policy need not be constant. The result therefore provides a
reference-regularized, trust-region-style interpretation of the practical
frozen-rollout on-policy self-distillation update.

\paragraph{Self-Referenced On-Policy Self-Distillation}  Empirically, the combination of a moving target and
a mode-seeking reverse KL divergence projection can produce unstable long-budget behavior,
as observed in \Cref{fig: mean acc eval performance of table 1}, while the
associated entropy dynamics are shown in
\Cref{fig: illustration_SR-OPSD_results}. Forward KL and Jensen--Shannon divergence (JSD) have also been considered as alternative divergences for on-policy self-distillation \citep{zhao2026selfdistilledreasoner,hubotter2026reinforcement}. At a fixed student-generated prefix, full-vocabulary forward KL, reverse KL,
JSD, and R\'enyi divergence are all well-defined and can be evaluated directly
when the corresponding logits are available. The practical difficulty arises
when the token objective is approximated using actions sampled from the student
or a restricted candidate set. We use the R\'enyi divergence family to generalize the projection geometry of OPSD. R\'enyi OPSD does not eliminate
expectation computation; instead, it gives a student-expectation density-ratio
form whose power is controlled continuously by $\rho$.
The reverse R\'enyi projection can be written as
\begin{talign*}
\max_{\theta}\sum_t
-D_\rho\!(\pi_\theta(a\mid h_t)\|\operatorname{sg}\![
\pi_{\bar\theta}(a\mid h_t^{\mathrm{sf}})])
=
\sum_t\frac{1}{1-\rho}
\log
\mathbb E_{a\sim \pi_\theta(a\mid h_t)}[(\frac{\operatorname{sg}\!\left[
\pi_{\bar\theta}(a\mid h_t^{\mathrm{sf}})
\right]}{\pi_\theta(a\mid h_t)})^{1-\rho}].
\end{talign*}
The forward R\'enyi projection is
\begin{talign*}
\max_{\theta}\sum_t
-D_\rho\!(\operatorname{sg}\![
\pi_{\bar\theta}(a\mid h_t^{\mathrm{sf}})]\|\pi_\theta(a\mid h_t))
=
\sum_t\frac{1}{1-\rho}
\log
\mathbb E_{a\sim \pi_\theta(a\mid h_t)}[(\frac{\operatorname{sg}\!\left[
\pi_{\bar\theta}(a\mid h_t^{\mathrm{sf}})
\right]}{\pi_\theta(a\mid h_t)})^\rho].
\end{talign*}
Thus, the two directions share a common likelihood-ratio form but apply
different powers to the teacher-to-student ratio. For $0<\rho<1$, these powers
temper extreme ratios and provide a tunable projection geometry. This
reformulation concerns the token distribution at an already visited prefix.

\begin{algorithm}[htp]
\caption{SR-OPSD}
\label{alg:renyi_sropsd}
\small
\begin{algorithmic}[1]
\Require $\pi_\theta,\pi_{\bar\theta},\pi_{\mathrm{ref}}$
\For{$k=0,1,\ldots$}
    \State Sample prompts $x\sim\mathcal D$ and responses
    $y\sim\pi_{\theta_k}(\cdot\mid x)$
    \For{$t=1,\ldots,|y|$}
        \State Construct $h_t$, $h_t^{\mathrm{sf}}$, and
        $h_t^{\mathrm{ref}}$
        \State Form the frozen target
        $\pi_{\alpha,k}^{\star}
        (\cdot\mid h_t^{\mathrm{sf}},h_t^{\mathrm{ref}})$
    \EndFor
    \State Update the policy
    $\displaystyle
    \theta_{k+1}
    \leftarrow \mathrm{Update}~(
    \theta_k,~
    -\eta\nabla_\theta
    \sum_t
    D_\rho\!\left(
    \pi_{\alpha,k}^{\star}
    (\cdot\mid h_t^{\mathrm{sf}},h_t^{\mathrm{ref}})
    \,\middle\|\,
    \pi_\theta(\cdot\mid h_t)
    \right)\bigg|_{\theta=\theta_k})$
    \State Update the self-teacher:
    $\bar\theta_{k+1}
    \leftarrow
    \beta\bar\theta_k+(1-\beta)\theta_{k+1}$
\EndFor
\State \Return $\pi_\theta$
\end{algorithmic}
\end{algorithm}

\begin{proposition}[Token-Level Variational Characterization of SR-OPSD]
\label{prop:opsd_forward_renyi_regularized_equivalence}
Fix $h$, $h_{\mathrm{sf}}$, and $h_{\mathrm{ref}}$. Let
$\alpha\in[0,1]$ and $\rho>0$, $\rho\neq1$, and assume that all token
probabilities are strictly positive. Define the token-level pseudo-reward $r_\alpha(a;h_{\mathrm{sf}},h_{\mathrm{ref}})
:=
\alpha\log
\frac{
\operatorname{sg}\!\left[
\pi_{\bar\theta}(a\mid h_{\mathrm{sf}})
\right]
}{
\pi_{\mathrm{ref}}(a\mid h_{\mathrm{ref}})
}$.
Define the R\'enyi-aggregated token functional $\mathcal R_{\alpha,\rho}
(\theta;h,h_{\mathrm{sf}},h_{\mathrm{ref}}):=
-\frac{1}{\rho-1}
\log
\frac{
\sum_{a\in\mathcal V}
\pi_{\mathrm{ref}}(a\mid h_{\mathrm{ref}})^\rho
\pi_\theta(a\mid h)^{1-\rho}
\exp\!\left(
\rho
r_\alpha(a;h_{\mathrm{sf}},h_{\mathrm{ref}})
\right)
}{
\sum_{a\in\mathcal V}
\pi_\theta(a\mid h)^\rho
\pi_{\mathrm{ref}}(a\mid h_{\mathrm{ref}})^{1-\rho}
}$.
Let $\mathcal J_{\alpha,\rho}
(\theta;h,h_{\mathrm{sf}},h_{\mathrm{ref}}) :=
\mathcal R_{\alpha,\rho}
(\theta;h,h_{\mathrm{sf}},h_{\mathrm{ref}})
-
D_\rho\!\left(
\pi_\theta(\cdot\mid h)
\,\middle\|\,
\pi_{\mathrm{ref}}(\cdot\mid h_{\mathrm{ref}})
\right)$.
Then
\begin{align}
\mathcal J_{\alpha,\rho}
(\theta;h,h_{\mathrm{sf}},h_{\mathrm{ref}}) =
-
D_\rho\!\left(
\pi_\alpha^\star(\cdot\mid h_{\mathrm{sf}},h_{\mathrm{ref}})
\,\middle\|\,
\pi_\theta(\cdot\mid h)
\right)
-
\frac{\rho}{\rho-1}
\log
Z_\alpha(h_{\mathrm{sf}},h_{\mathrm{ref}}).
\end{align}
Therefore, for fixed contexts and frozen target components, maximizing
$\mathcal J_{\alpha,\rho}$ with respect to $\theta$ is equivalent to
minimizing the forward R\'enyi projection from
$\pi_\alpha^\star(\cdot\mid h_{\mathrm{sf}},h_{\mathrm{ref}})$ to
$\pi_\theta(\cdot\mid h)$.
\end{proposition}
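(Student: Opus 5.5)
The proof is a direct algebraic computation. I will rewrite the numerator and denominator inside the logarithm defining $\mathcal R_{\alpha,\rho}$ as exponentiated R\'enyi divergences. The denominator will cancel the regularizer, and the numerator will produce the forward R\'enyi projection onto $\pi_\alpha^\star$ from Proposition~\ref{prop:opsd_kl_regularized_equivalence}. To lighten notation, I abbreviate $q(a)=\pi_\theta(a\mid h)$, $p(a)=\pi_{\mathrm{ref}}(a\mid h_{\mathrm{ref}})$ and $s(a)=\operatorname{sg}[\pi_{\bar\theta}(a\mid h_{\mathrm{sf}})]$. Strict positivity guarantees that every power and logarithm below is well defined and that all sums are over the common support $\mathcal V$.

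\textbf{Step 1 (denominator).} By the definition of R\'enyi divergence in Section~\ref{sec: background},
\[
\sum_a q(a)^\rho p(a)^{1-\rho}=\exp\!\big((\rho-1)D_\rho(q\|p)\big).
\]
Hence $-\frac{1}{\rho-1}$ times the logarithm of $\sum_a q(a)^\rho p(a)^{1-\rho}$ equals $-D_\rho(q\|p)$. Because this term enters $\mathcal R_{\alpha,\rho}$ through a division, it contributes $+D_\rho(q\|p)$ to $\mathcal R_{\alpha,\rho}$.

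\textbf{Step 2 (numerator).} Using $\exp(\rho r_\alpha(a))=(s(a)/p(a))^{\alpha\rho}$, each summand becomes
\[
p^\rho q^{1-\rho}(s/p)^{\alpha\rho}=\big(s^\alpha p^{1-\alpha}\big)^\rho q^{1-\rho}=Z_\alpha^\rho\,(\pi_\alpha^\star)^\rho q^{1-\rho}.
\]
Here I use the definitions of $Z_\alpha$ and $\pi_\alpha^\star$ from Proposition~\ref{prop:opsd_kl_regularized_equivalence}, which are valid for any $\alpha\in[0,1]$ under positivity. Summing over $a$ and taking logarithms gives
\[
\rho\log Z_\alpha+(\rho-1)D_\rho(\pi_\alpha^\star\|q).
\]
Multiplying by $-\frac{1}{\rho-1}$ then yields the contribution $-D_\rho(\pi_\alpha^\star\|q)-\frac{\rho}{\rho-1}\log Z_\alpha$.

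\textbf{Step 3 (assembly).} Combining the two steps gives
\[
\mathcal R_{\alpha,\rho}=-D_\rho(\pi_\alpha^\star\|q)-\frac{\rho}{\rho-1}\log Z_\alpha+D_\rho(q\|p).
\]
Subtracting $D_\rho(q\|p)$ as in the definition of $\mathcal J_{\alpha,\rho}$ gives the claimed identity. The equivalence of maximizers then follows because $Z_\alpha$ depends only on the frozen, stop-gradient components and on the fixed contexts, so it is constant in $\theta$. As $\rho\to1$ one can also check that the identity formally recovers Proposition~\ref{prop:opsd_kl_regularized_equivalence}, although this is not needed.

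\textbf{Main obstacle.} The only delicate point is bookkeeping. The numerator must be recognized as the forward divergence $D_\rho(\pi_\alpha^\star\|\pi_\theta)$ rather than the reverse one, since the exponent $1-\rho$ sits on $q$. The signs $\frac{1}{\rho-1}$ versus $\frac{1}{1-\rho}$ must also be tracked carefully, so that the constant comes out as $-\frac{\rho}{\rho-1}\log Z_\alpha$ for both $\rho<1$ and $\rho>1$.
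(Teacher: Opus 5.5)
Your computation is correct and is essentially the paper's own proof: both rewrite each numerator summand as $Z_\alpha^\rho(\pi_\alpha^\star)^\rho q^{1-\rho}$, identify the numerator sum with $Z_\alpha^\rho\exp\bigl((\rho-1)D_\rho(\pi_\alpha^\star\|q)\bigr)$ and the denominator with $\exp\bigl((\rho-1)D_\rho(q\|p)\bigr)$, and then cancel the regularizer $D_\rho(q\|p)$. You should drop your optional remark about $\rho\to1$, because the identity does not recover Proposition~\ref{prop:opsd_kl_regularized_equivalence}: $D_\rho(\pi_\alpha^\star\|q)$ tends to the forward KL $D_{\mathrm{KL}}(\pi_\alpha^\star\|q)$ rather than the reverse one, and $-\frac{\rho}{\rho-1}\log Z_\alpha$ diverges whenever $Z_\alpha<1$, which by H\"older's inequality holds for every $\alpha\in(0,1)$ unless the self-teacher and reference distributions coincide.
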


See Appendix~\ref{appendix:proof_of_opsd_forward_renyi_regularized_equivalence}
for proof. The functional $\mathcal R_{\alpha,\rho}$ depends on the policy to be optimized and should be interpreted as a conditional risk-sensitive variational
representation, rather than as an environment reward. When
$\alpha=1$, $Z_1=1$ and the proposition reduces to R\'enyi OPSD
with the self-teacher as its target. The variational target suggests explicitly combining the self-teacher policy with a reference policy. At position $t$, define $\pi_\alpha^\star
(a\mid h_t^{\mathrm{sf}},h_t^{\mathrm{ref}})
\propto \operatorname{sg}\!\left[
\pi_{\bar\theta}(a\mid h_t^{\mathrm{sf}})
\right]^\alpha
\pi_{\mathrm{ref}}(a\mid h_t^{\mathrm{ref}})^{1-\alpha}$.
A natural fixed reference is
$\pi_{\mathrm{ref}}:=\operatorname{sg}[\pi_{\theta_0}]$, where $\theta_0$
denotes the parameters of the policy before optimization. More generally, the reference policy should provide a useful anchor. The corresponding token-level objective is:
\begin{equation}
\begin{aligned}
\max_\theta\;
&\mathbb E_{\substack{
x\sim\mathcal D\\
y\sim\pi_{\theta_{\mathrm{old}}}(\cdot\mid x)
}}
\sum_{t=1}^{L}
\frac{1}{1-\rho}[
\log\sum_{a\in\mathcal V}
\exp(
(1-\rho)\log\pi_\theta(a\mid h_t)
\\
&\qquad\qquad
+\rho\alpha
\log\operatorname{sg}\!\left[
\pi_{\bar\theta}(a\mid h_t^{\mathrm{sf}})
\right]
+\rho(1-\alpha)
\log\pi_{\mathrm{ref}}(a\mid h_t^{\mathrm{ref}})
)].
\end{aligned}
\end{equation}
See \Cref{alg:renyi_sropsd} for the pseudo-code, where
$\mathrm{Update}(\cdot)$ denotes an optimizer such as stochastic gradient
descent or Adam. When a candidate-space approximation is used, the same
objective is evaluated on a common normalized candidate set, ideally
augmented with a tail bucket.

The coefficient $\alpha$ controls the location of the self-referenced target:
$\alpha=1$ recovers the self-teacher, whereas $\alpha=0$ recovers the
reference policy. In contrast, $\rho$ controls the R\'enyi projection toward
this target. As $\rho\to1$, the projection recovers the corresponding KL
limit. Proposition~\ref{prop:renyi_logit_gradient} further shows that $\rho$
power-tempers the target-to-student density ratio. Although
$\alpha$ and $\rho$ parameterize different components of the objective,
their effects interact in the resulting gradient through the product
$\alpha\rho$. As shown in \Cref{fig: mean acc eval performance of table 1}, the proposed method achieves effective performance throughout training, while \Cref{fig: illustration_SR-OPSD_results} shows stable policy-entropy dynamics. Qualitative examples are provided in Appendix~\ref{app:qualitative-generation}.

\begin{proposition}[Logit Gradient of SR-OPSD]
\label{prop:renyi_logit_gradient}
Fix $h$, $h_{\mathrm{sf}}$, and $h_{\mathrm{ref}}$, and treat
$\pi_\alpha^\star(\cdot\mid h_{\mathrm{sf}},h_{\mathrm{ref}})$
as frozen during the student update. Let
$z_\theta(a\mid h)$ denote the student logit such that $\pi_\theta(a\mid h)=
\frac{\exp(z_\theta(a\mid h))}
{\sum_{b\in\mathcal V}\exp(z_\theta(b\mid h))}$.
For $\rho>0$, $\rho\neq1$, define $\widetilde{\pi}_{\alpha,\rho,\theta}
(a\mid h,h_{\mathrm{sf}},h_{\mathrm{ref}}) \propto  \pi_\theta(a\mid h) ~(\nicefrac{\pi_\alpha^\star(a\mid h_{\mathrm{sf}},h_{\mathrm{ref}})}{\pi_\theta(a\mid h)})^\rho$.
Then $\frac{\partial}{\partial z_\theta(a\mid h)}
D_\rho\!\left(
\pi_\alpha^\star(\cdot\mid h_{\mathrm{sf}},h_{\mathrm{ref}})
\,\middle\|\,
\pi_\theta(\cdot\mid h)
\right)
=
\pi_\theta(a\mid h)
-
\widetilde{\pi}_{\alpha,\rho,\theta}
(a\mid h,h_{\mathrm{sf}},h_{\mathrm{ref}})$.
\end{proposition}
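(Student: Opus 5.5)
Write $p:=\pi_\alpha^\star(\cdot\mid h_{\mathrm{sf}},h_{\mathrm{ref}})$, which is frozen, and $q:=\pi_\theta(\cdot\mid h)$. The plan is a direct chain-rule computation through the softmax. First, expand the divergence:
\[
D_\rho(p\|q)=\frac{1}{\rho-1}\log S(z),
\qquad
S(z):=\sum_{b\in\mathcal V}p(b)^\rho q(b)^{1-\rho}.
\]
By the positivity assumption, $S>0$ and every term is smooth in the logits. Only $q$ depends on $z$, so
\[
\frac{\partial D_\rho}{\partial z_a}
=\frac{1}{\rho-1}\,\frac{1}{S}\sum_{b}p(b)^\rho(1-\rho)\,q(b)^{-\rho}\,\frac{\partial q(b)}{\partial z_a}.
\]

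Next, substitute the softmax Jacobian $\partial q(b)/\partial z_a=q(b)\bigl(\mathbf 1[a=b]-q(a)\bigr)$. The factor $(1-\rho)/(\rho-1)=-1$ cancels the prefactor, and each summand becomes $p(b)^\rho q(b)^{1-\rho}\bigl(\mathbf 1[a=b]-q(a)\bigr)$. Summing over $b$ gives
\[
\frac{\partial D_\rho}{\partial z_a}
=-\frac{1}{S}\Bigl(p(a)^\rho q(a)^{1-\rho}-q(a)\,S\Bigr)
=q(a)-\frac{p(a)^\rho q(a)^{1-\rho}}{S}.
\]

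Finally, identify the last term with $\widetilde\pi_{\alpha,\rho,\theta}$. Since $q(a)\,(p(a)/q(a))^\rho=p(a)^\rho q(a)^{1-\rho}$, the normalizing constant of $\widetilde\pi$ is exactly $S$. The last term is therefore $\widetilde\pi_{\alpha,\rho,\theta}(a\mid h,h_{\mathrm{sf}},h_{\mathrm{ref}})$, which yields the claimed identity.

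No step is a genuine obstacle. The two things to check carefully are that the sign from $(1-\rho)/(\rho-1)$ cancels correctly, and that the stop-gradient on $p$ justifies treating $p$ as constant. The latter holds because $p$ depends only on $\bar\theta$ and $\pi_{\mathrm{ref}}$, and neither is differentiated.
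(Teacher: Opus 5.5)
Your proposal is correct and follows essentially the same route as the paper. Both write $D_\rho=\frac{1}{\rho-1}\log S$, differentiate through the softmax Jacobian, use $(1-\rho)/(\rho-1)=-1$, and recognize $S$ as the normalizer of $\widetilde{\pi}_{\alpha,\rho,\theta}$; the paper only adds a closing remark that $Z_\alpha^{-\rho}$ cancels in $\widetilde{\pi}_{\alpha,\rho,\theta}$, which the stated identity does not need.
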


\section{Experiments}
We evaluate the proposed method across various representative scenarios that capture different feedback regimes and
levels of supervision, including: (i) Science Q\&A benchmarks without rich environment feedback; (ii) mathematical reasoning on challenging problems with access to ground-truth solutions; and (iii) scaling behavior on coding tasks with rich execution-based feedback; and (iv) ablation study on the effectiveness of each component of the proposed method. All experiments are conducted on NVIDIA H200 and A800 GPUs.

\subsection{Performance with Bootstrapping on Generated Responses as Feedback} 

\paragraph{Setting} We compare our method against SDPO, on-policy GRPO, which samples trajectories from the current policy, and GRPO, which reweights trajectories from previous policies using importance sampling. For all baselines, we use the same hyperparameter settings as SDPO. We evaluate scientific reasoning on undergraduate-level problems in chemistry, physics, biology, and materials science using the L3 reasoning subset of SciKnowEval \citep{feng2024sciknoweval}, with a train--test split to assess in-domain generalization. We initialize the policies from Qwen3-8B \citep{yang2025qwen3} and OLMo-3-7B-Instruct \citep{olmo2025olmo}, and report Avg@16 in \Cref{tab:performance on science} in terms of wall-clock training time. We also report Maj@16 accuracy and Best@16 accuracy in \Cref{fig: major acc on science} and \Cref{fig: best acc on science}. All experiments are conducted using the \texttt{verl} framework on NVIDIA H200 GPUs to ensure fair comparisons with \citet{hubotter2026reinforcement}. See \Cref{tab: experiments configs for science and math tasks} for detailed experimental configurations.

\begin{table}[htp]
  \centering
  \small
  \sisetup{detect-weight=true}
  \setlength{\tabcolsep}{5pt}
  \renewcommand{\arraystretch}{1.15}
  \newcommand{\secondbest}[1]{\multicolumn{1}{c}{\underline{#1}}}
  \caption{Performance on SciKnowEval benchmark. We report Avg@16 accuracy. The best and second-best results are shown in bold and underlined, respectively.}
  \label{tab:performance on science}
  \resizebox{\textwidth}{!}{%
    \begin{tabular}{@{}l *{15}{S}@{}}
      \toprule
      & \multicolumn{3}{c}{Chemistry}
      & \multicolumn{3}{c}{Physics}
      & \multicolumn{3}{c}{Biology}
      & \multicolumn{3}{c}{Materials}
      & \multicolumn{3}{c}{Tool use} \\
      \cmidrule(lr){2-4}
      \cmidrule(lr){5-7}
      \cmidrule(lr){8-10}
      \cmidrule(lr){11-13}
      \cmidrule(lr){14-16}
      & {5h} & {10h} & {15h}
      & {5h} & {10h} & {15h}
      & {5h} & {10h} & {15h}
      & {5h} & {10h} & {15h}
      & {5h} & {10h} & {15h} \\
      \midrule

      \textbf{Qwen3-8B}
      & \multicolumn{3}{c}{41.2}
      & \multicolumn{3}{c}{59.2}
      & \multicolumn{3}{c}{30.8}
      & \multicolumn{3}{c}{58.9}
      & \multicolumn{3}{c}{57.5} \\

      \ + GRPO
      & 63.1 & 68.3 & 72.6
      & 62.7 & 72.0 & \secondbest{74.1}
      & 35.2 & 52.5 & 53.5
      & 74.3 & 75.9 & \secondbest{79.3}
      & 62.6 & \bfseries 66.3 & \secondbest{66.3} \\

      \ + GRPO {\small (on-policy)}
      & 51.8 & 62.9 & 68.1
      & 60.8 & 62.2 & 62.6
      & 33.8 & 35.5 & 36.4
      & 70.4 & 73.1 & 74.8
      & 60.1 & 62.5 & 65.5 \\

      \ + SDPO
      & \bfseries 80.5 & \secondbest{80.5} & \secondbest{80.5}
      & \secondbest{71.1} & \secondbest{73.4} & 73.4
      & \bfseries 66.0 & \bfseries 66.0 & \bfseries 66.0
      & \secondbest{76.7} & \secondbest{78.3} & 78.3
      & \bfseries 63.8 & 63.8 & 63.8 \\

      \ + SR-OPSD (ours)
      & \secondbest{80.2} & \bfseries 82.8 & \bfseries 82.8
      & \bfseries 80.1 & \bfseries 81.2 & \bfseries 82.3
      & \secondbest{58.5} & \secondbest{58.5} & \secondbest{62.5}
      & \bfseries 77.7 & \bfseries 79.6 & \bfseries 80.9
      & \secondbest{63.6} & \secondbest{64.4} & \bfseries 67.5 \\

      \addlinespace[2pt]
      \midrule

      \textbf{Olmo3-7B-Instruct}
      & \multicolumn{3}{c}{22.8}
      & \multicolumn{3}{c}{37.7}
      & \multicolumn{3}{c}{16.2}
      & \multicolumn{3}{c}{36.7}
      & \multicolumn{3}{c}{39.3} \\

      \ + GRPO
      & 39.1 & 45.4 & 53.4
      & 58.9 & 61.4 & 62.8
      & 33.1 & 38.5 & 42.6
      & 71.4 & 73.3 & 74.2
      & 56.9 & \secondbest{61.0} & \secondbest{61.0} \\

      \ + GRPO {\small (on-policy)}
      & 37.5 & 40.3 & 44.4
      & 54.3 & 59.1 & 60.9
      & 29.8 & 31.6 & 34.9
      & 66.4 & 70.7 & 73.3
      & 55.3 & 57.0 & 59.6 \\

      \ + SDPO
      & \bfseries 79.6 & \secondbest{79.6} & \secondbest{79.6}
      & \secondbest{67.6} & \secondbest{70.8} & \secondbest{70.8}
      & \bfseries 51.9 & \bfseries 52.0 & \bfseries 52.0
      & \bfseries 77.4 & \bfseries 77.4 & \bfseries 77.4
      & \bfseries 58.8 & 60.4 & 60.4 \\

      \ + SR-OPSD (ours)
      & \secondbest{78.7} & \bfseries 79.9 & \bfseries 79.9
      & \bfseries 68.1 & \bfseries 72.3 & \bfseries 74.8
      & \secondbest{50.1} & \secondbest{51.1} & \secondbest{51.1}
      & \secondbest{76.2} & \secondbest{76.2} & \secondbest{76.2}
      & \secondbest{58.1} & \bfseries 61.4 & \bfseries 62.7 \\

      \bottomrule
    \end{tabular}%
  }
\end{table}

\begin{wrapfigure}{r}{0.62\textwidth}
    \centering
    \vspace{-0.8em}
    \includegraphics[width=\linewidth]
    {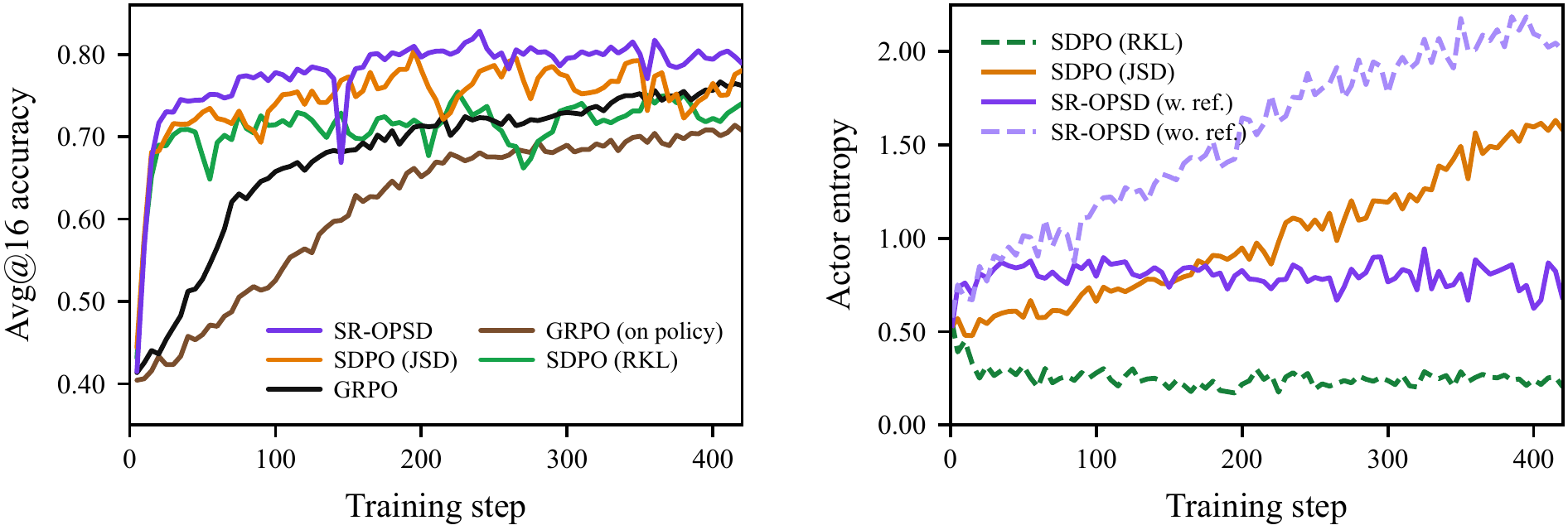}
    \caption{Investigation of Entropy of the Policy on
    Science Q\&A (Chemistry).
    Left: Avg@16 accuracy.
    Right: Training entropy of the policy.}
    \label{fig: illustration_SR-OPSD_results}
    \vspace{-0.8em}
\end{wrapfigure}

\paragraph{Results Analysis} As shown in \Cref{tab:performance on science}, SR-OPSD achieves the best $15$-hour Avg@16 accuracy in seven of the ten cases and outperforms both GRPO variants in all ten settings. For Qwen3-8B, SR-OPSD surpasses the strongest competing baseline by $2.3$, $8.2$, $1.6$, and $1.2$ percentage points on Chemistry, Physics, Materials, and Tool Use, respectively, while SDPO remains stronger on Biology by $3.5$ points. Averaged across the five domains, SR-OPSD reaches $75.2$ Avg@16 at 15 hours, compared with $72.4$ for SDPO and $69.2$ for GRPO. On OLMo3-7B-Instruct, SR-OPSD obtains the best results on Chemistry, Physics, and Tool Use, improving over the strongest baseline by $0.3$, $4.0$, and $1.7$ points, respectively. It remains competitive on Biology and Materials, trailing SDPO by only $0.9$ and $1.2$ points. Its five-domain average reaches $68.9$, exceeding SDPO and GRPO by $0.9$ and $10.1$ points, respectively. Meanwhile, we also investigate the entropy of the policy and corresponding generation quality in \Cref{fig: illustration_SR-OPSD_results} and Appendix~\ref{app:qualitative-generation}. It is found that SR-OPSD also stabilizes the entropy of the policy and has better generation quality than the associated baselines.

\begin{figure}[htp]
    \centering
    \includegraphics[width=0.99\linewidth]{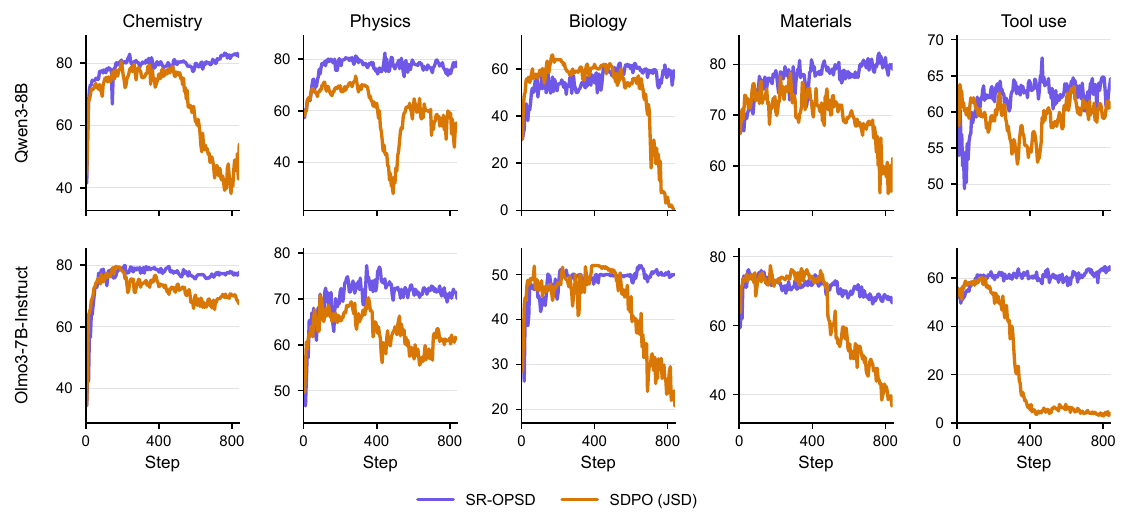}
\caption{SR-OPSD improves long-budget training stability.
}
\label{fig: mean acc eval performance of table 1}
\end{figure}

In \Cref{fig: mean acc eval performance of table 1}, we report Avg@16 validation accuracy over training steps on five reasoning-related benchmarks for Qwen3-8B and Olmo3-7B-Instruct.
Compared with the JSD-based SDPO baseline, SR-OPSD generally maintains more stable long-budget performance, whereas SDPO often reaches an early peak and then degrades. These results suggest that reference anchoring and R\'enyi projection provide a more favorable performance and training stability.

\subsection{Performance on Mathematical Reasoning Benchmarks} 
\paragraph{Setting} We then evaluate the performance in a setting where we have access to ground-truth solutions as feedback on very hard mathematical reasoning problems. Following the data construction of \citet{agrawal2026reinforcementlearningrichfeedback}, we use a training collection of 738 particularly challenging mathematics problems. We benchmark our approach against OPSD (forward KL), SDPO (reverse KL), and GRPO on AIME 2024, AIME 2025, HMMT 2025, AMC 2023, and Minerva, with all methods initialized from Qwen3-4B-Instruct-2507. Due to the high computational cost of evaluation, we train all methods for 200 steps and report performance at the final. See \Cref{tab: experiments configs for science and math tasks} for detailed experimental configurations.

\begin{table}[htp]
\centering
\caption{Performance on mathematical reasoning benchmarks. The best results are shown in bold, and the second-best results are underlined.}
\label{tab:performance_on_math}
\resizebox{\textwidth}{!}{%
\begin{tabular}{@{}ll*{10}{c}@{}}
\toprule
\multirow{2}{*}{Model} &
\multirow{2}{*}{Method} &
\multicolumn{2}{c}{AIME24} &
\multicolumn{2}{c}{AIME25} &
\multicolumn{2}{c}{HMMT25} &
\multicolumn{2}{c}{AMC23} &
\multicolumn{2}{c}{Minerva} \\
\cmidrule(lr){3-4}
\cmidrule(lr){5-6}
\cmidrule(lr){7-8}
\cmidrule(lr){9-10}
\cmidrule(lr){11-12}
& & Avg@64 & Pass@64 & Avg@64 & Pass@64 & Avg@64 & Pass@64 & Avg@64 & Pass@64 & Avg@64 & Pass@64 \\
\midrule
\multirow{5}{*}{Qwen3-4B}
& Base
& \textbf{61.4}
& \underline{82.0}
& \underline{50.3}
& 69.5
& \underline{30.3}
& 48.8
& \textbf{93.8}
& \underline{98.8}
& 43.2
& 48.6 \\
& GRPO
& 58.8
& \textbf{83.3}
& 49.2
& \underline{73.3}
& \textbf{32.5}
& \underline{60.0}
& 88.8
& \textbf{100.0}
& \underline{45.9}
& \textbf{59.6} \\
& SDPO
& 41.3
& 80.0
& 31.9
& 66.7
& 19.6
& 50.0
& 74.5
& 97.5
& 42.9
& 57.7 \\
& OPSD
& 48.6
& 76.7
& 37.0
& 56.7
& 22.3
& 46.7
& 81.1
& 97.5
& 45.0
& 57.0 \\
& SR-OPSD (Ours)
& \underline{59.3}
& \textbf{83.3}
& \textbf{51.8}
& \textbf{80.0}
& \textbf{32.5}
& \textbf{70.0}
& \underline{91.0}
& \textbf{100.0}
& \textbf{46.2}
& \underline{58.8} \\
\bottomrule
\end{tabular}%
}
\end{table}

\paragraph{Results Analysis} As shown in \Cref{tab:performance_on_math}, SR-OPSD achieves the strongest overall performance across the five mathematical-reasoning benchmarks. It obtains the best Avg accuracy on AIME25 and Minerva, ties for the best Avg accuracy on HMMT25, and achieves the best or tied-best Pass accuracy on AIME24, AIME25, HMMT25, and AMC23. In particular, compared with GRPO, SR-OPSD improves Pass accuracy by $6.7$ points on AIME25 and $10.0$ points on HMMT25, while increasing Avg accuracy by $2.6$, $2.2$, and $0.3$ points on AIME25, AMC23, and Minerva, respectively. Averaged uniformly across the five benchmarks, SR-OPSD reaches $56.2$ Avg accuracy and $78.4$ Pass accuracy, compared with $55.0$ and $75.2$ for GRPO.

The advantage over the self-distillation baselines is more pronounced. SR-OPSD exceeds forward-KL OPSD by $9.4$ points in mean Avg accuracy and $11.5$ points in mean Pass accuracy, and exceeds reverse-KL SDPO by $14.1$ and $8.0$ points, respectively. These results suggest that directly matching a self-teacher with KL projection can substantially degrade the performance of the base models in this challenging regime, whereas reference anchoring and R\'enyi projection better preserve its initial capabilities while incorporating solution feedback. SR-OPSD show the promising performance across these benchmarks, substantially improving AIME25, HMMT25, and Minerva without the severe degradation observed for OPSD and SDPO.

\subsection{Performance on Code generation under Model Scaling} 
\paragraph{Setting} We next assess our method on code-generation tasks. Programming provides a representative setting with informative feedback signals, including execution errors and unsuccessful unit tests. Success in this domain depends on effective credit assignment, since the model must locate the precise source of an error and use that information to avoid making the same mistake in subsequent attempts. Following the protocol of \citep{hubotter2026reinforcement}, we focus our evaluation on the LCBv6 subset. We use the Qwen3 model family for this investigation and train all methods for 120 iterations.  See \Cref{tab: experiments config for LCB task} for detailed experimental configurations.

\vspace{-1em}
\begin{table}[htp]
\centering
\caption{Performance on LiveCodeBench v6 across Qwen3 model scales. The best result in each column is shown in bold, and the second-best result is underlined.}
\begin{tabular}{lcccc}
\toprule
Method & Qwen3-0.6B & Qwen3-1.7B & Qwen3-4B & Qwen3-8B \\
\midrule
Base & 9.2 & 14.7 & 23.1 & 27.9 \\
GRPO & \underline{18.5} & 29.3 & 39.9 & 41.2 \\
SDPO & \textbf{18.7} & \underline{30.7} & \underline{44.7} & \underline{48.8} \\
SR-OPSD & 18.0 & \textbf{32.0} & \textbf{46.2} & \textbf{50.1} \\
\bottomrule
\end{tabular}
\label{tab:qwen3_scaling}
\end{table}

\paragraph{Results Analysis} As shown in Table~\ref{tab:qwen3_scaling}, all methods improve over their corresponding base models across the four model scales. SR-OPSD achieves the best performance on Qwen3-1.7B, Qwen3-4B, and Qwen3-8B, reaching $32.0$, $46.2$, and $50.1$, respectively. Compared with GRPO, these results correspond to improvements of $2.7$, $6.3$, and $8.9$ percentage points, while the gains over SDPO are $1.3$, $1.5$, and $1.3$ points. On Qwen3-0.6B, SR-OPSD remains competitive but trails GRPO and SDPO by $0.5$ and $0.7$ points, respectively. One potential reason for the slightly degraded performance on the 0.6B model is that we did not tune the hyperparameters due to the expensive computational cost. Nevertheless, it still nearly doubles the base-model pass-all success rate, improving it from $9.2$ to $18.0$. Moreover, except at the smallest scale, SR-OPSD outperforms SDPO as model scale increases.

\subsection{Ablation Study} 
\begin{wraptable}{r}{0.7\textwidth}
\vspace{-1em}
\centering
\caption{Ablation studies on SciKnowEval Physics. The best result in each
column is shown in bold.}
\label{tab:ablation_components}
\resizebox{0.56\textwidth}{!}{%
\begin{tabular}{lccc}
\toprule
Optimization Objective
& Self-reference
& R\'enyi
& Avg@16 \\
\midrule
JSD
& $\times$
& $\times$
& 79.4 \\
JSD
& \checkmark
& $\times$
& 77.2 \\
\midrule
Forward KL
& $\times$
& $\times$
& 79.1 \\
Forward KL
& \checkmark
& $\times$
& 76.3 \\
\midrule
Forward R\'enyi ($\rho=0.95$)
& $\times$
& \checkmark
& 77.7 \\
\midrule
SR-OPSD ($\rho=0.50$)
& \checkmark
& \checkmark
& 79.9 \\
SR-OPSD ($\rho=0.70$)
& \checkmark
& \checkmark
& 80.6 \\
SR-OPSD ($\rho=0.95$)
& \checkmark
& \checkmark
& \textbf{81.1} \\
\bottomrule
\end{tabular}%
}
\end{wraptable}
\paragraph{Setting} We conduct a controlled ablation to examine the effects of self-reference, divergence choice, and the R\'enyi order $\rho$. All variants use the same model initialization, training data, optimization budget, and evaluation protocol as the main experiment, and performance is measured using Avg@16 accuracy. We first compare two commonly used on-policy self-distillation objectives, JSD \citep{hubotter2026reinforcement} and forward KL \citep{zhao2026selfdistilledreasoner}, both with and without the reference-anchored target. We then evaluate the complete SR-OPSD objective, which combines self-reference with R\'enyi projection, tuning $\rho\in\{0.50,0.70,0.95\}$.

\paragraph{Results Analysis} 
Table~\ref{tab:ablation_components} shows that self-reference does not improve performance when it is combined with the standard JSD or forward-KL objectives. Adding self-reference reduces Avg@16 from $79.4$ to $77.2$ for JSD and from $79.1$ to $76.3$ for forward KL, suggesting that target anchoring alone is insufficient and may even be detrimental when paired with an unsuitable projection geometry. By contrast, SR-OPSD, which combines self-reference with R\'enyi projection, achieves the strongest performance. Its Avg@16 score increases from $79.9$ at $\rho=0.50$ to $80.6$ at $\rho=0.70$ and $81.1$ at $\rho=0.95$, outperforming the best non-R\'enyi variant by $1.9$ points. These results indicate that the benefit of self-reference depends critically on the divergence used to approach the anchored target, and that tuning $\rho$ provides an effective mechanism for controlling this projection.

\section{Related Work}

\paragraph{On-Policy Distillation}
On-policy distillation (OPD) uses student-generated trajectories for training, allowing the teacher to provide distributional supervision on prefixes visited by the student \cite{agarwal2023onpolicy, wen-etal-2023-f, gu2023minillm,  lu2025onpolicydistillation, jin2026entropyaware, luo2026demystifying, yang2026learningbeyondteacher, stein2026gates, oh2026kl, kim2026distillation}. \citet{gu2023minillm} optimizes sequence-level reverse KL through policy-gradient estimators, while \citet{agarwal2023onpolicy} evaluates forward KL, reverse KL, and JSD on student rollouts. Later work studies entropy-dependent divergence selection, reference-policy constraints, and rollout mixing to improve stability \citep{jin2026entropyaware,luo2026demystifying}. \citet{yang2026learningbeyondteacher} further formulates external-teacher OPD as KL-regularized policy optimization, where a teacher-to-reference log-probability ratio defines the dense reward and controls interpolation or extrapolation between the two policies. See \citep{song2026survey} for a detailed review.

\paragraph{On-Policy Self-Distillation}
OPSD replaces the external teacher with a self-teacher derived from the same underlying model and conditioned on training-only privileged or auxiliary context. Existing methods use ground-truth solutions, environment feedback, successful sibling responses, or document context to construct the self-teacher \citep{zhao2026selfdistilledreasoner,hubotter2026reinforcement,penaloza2026pidistill, ye2026policy, sang2026crisp, zhao2026training, wang2026trace}. Subsequent work controls the amount of additional information exposed to the teacher, restricts distillation to selected spans, or gates token-level supervision according to teacher reliability and uncertainty \citep{han2026atesd,wang2026trace,liu2026whenteachertokensreliable,ke2026egrsd}. Other studies analyze the effects of additional context information, Top-$K$ approximation, and update schemes of self-teacher policies \citep{zhu2026manyfaces,guo2026whenshouldteacher}. Existing objectives mainly use forward KL, reverse KL, or JSD for direct distribution matching \citep{zhao2026selfdistilledreasoner,hubotter2026reinforcement}, while preference-based self-distillation replaces direct matching with reward-regularized preference optimization \citep{yu2026preferencebased}.

In this work, we regard the additional context information and the construction of the self-teacher policies as given, and focus on the target distribution
and divergence geometry used for matching at fixed on-policy contexts.
Specifically, we combine the self-teacher policy and the reference policy
into the normalized geometric target. This log-linear teacher--reference target is closely related to the target
induced by reward-scaled KL-regularized OPD
\citep{yang2026learningbeyondteacher}.
The coefficient $\alpha$ controls interpolation, or extrapolation when
permitted, along the log-density path between the reference policy and the
self-teacher. We then project the student toward $\pi_{\alpha}^\star$ using a
R\'enyi divergence of order $\rho$.

\section{Conclusion}

In this work, we propose SR-OPSD, which constructs a reference-anchored self-teacher target and optimizes it using a R'enyi projection. Our results highlight that the effectiveness of self-reference depends
critically on the objective through which it is imposed. Reference anchoring alone is not sufficient, and self-reference can even degrade performance under standard forward-KL or JSD objectives. In contrast, coupling the self-referenced target with an appropriate R\'enyi projection improves performance and training stability across scientific reasoning,
mathematical reasoning, and code generation. These results suggest that
feedback in on-policy self-distillation is most effective when both its
influence on the target distribution and the policy's response to that target are explicitly controlled, rather than treating feedback as an unconstrained imitation signal.

Several limitations remain. Our theoretical characterization concerns
token-level optimization at fixed rollout contexts and does not establish
global convergence of the fully coupled student--self-teacher dynamics. In particular, context-distribution shift remains an additional source of
error that must be controlled separately. Moreover, the target-interpolation
coefficient, R\'enyi order, and teacher update rate are fixed and selected
empirically. Future work could study adaptive choices of these quantities
based on teacher reliability or policy drift, develop tighter guarantees for the evolving on-policy dynamics, and evaluate SR-OPSD under noisier feedback and at larger model scales.

\clearpage
\bibliographystyle{plainnat}
\bibliography{reference}

\newpage
\appendix

\section{Notation Table}
\label{appendix:notation}

\begin{table}[h]
\centering
\scriptsize
\renewcommand{\arraystretch}{1.08}
\caption{Summary of the notation.}
\begin{tabular}{p{0.20\textwidth}p{0.72\textwidth}}
\toprule
\textbf{Symbol} & \textbf{Meaning} \\
\midrule
$x,\mathcal D$ & Prompt and prompt distribution, with $x\sim\mathcal D$. \\
$y,y_t,y_{<t}$ & On-policy response, token at position $t$, and prefix before $t$. \\
$a,\mathcal V,\mathcal C_t$ & Generic next-token action, full vocabulary, and candidate set at position $t$. \\
$h_t$ & Student prefix context, $h_t=(x,y_{<t})$. \\
$h_t^{\mathrm{sf}},h_t^{\mathrm{ref}}$ & Context information for the self-teacher policy and reference policy at position $t$, respectively. \\
$h,h_{\mathrm{sf}},h_{\mathrm{ref}}$ & Fixed-context shorthand obtained by suppressing the position index $t$. \\
$\pi_\theta,\theta$ &  Policy to be optimized (student policy) and its parameters. \\
$\pi_{\bar\theta},\bar\theta$ & Frozen or EMA self-teacher policy and its parameters. \\
$\pi_{\theta_{\mathrm{old}}}$ & Rollout policy used to sample the current on-policy batch. \\
$\pi_{\mathrm{ref}}$ & Reference policy.\\
$\theta_0$ &  Parameters of the student policy before optimization.\\
$\pi_T$ & External teacher policy, when on-policy distillation is discussed. \\
$\operatorname{sg}[\cdot]$ & Stop-gradient operator. \\
$\pi_\alpha^\star,Z_\alpha$ & Reference-anchored self-teacher target and its normalizing constant. \\
$\alpha,\rho$ & Target-interpolation coefficient and R\'enyi order. \\
$D_{\mathrm{KL}},D_\rho$ & KL divergence and R\'enyi divergence. \\
$\beta$ & EMA coefficient. \\
\bottomrule
\end{tabular}
\label{tab:notation}
\end{table}

\clearpage
\section{Qualitative Comparison of Generations}
\label{app:qualitative-generation}

We select representative cases from SciKnowEval that expose recurrent differences in reasoning stability rather than estimate their population frequency. All compared models are Qwen3-8B checkpoints post-trained on SciKnowEval using the corresponding methods. The model post-trained with SR-OPSD generally gives shorter, more direct explanations while preserving the connection between intermediate reasoning and the selected option. Models post-trained with the other methods more often extend an initially valid argument until they introduce a contradiction, factual error, or answer-option drift. The model post-trained with SDPO (RKL) exhibits a different failure mode characterized by terse, repetitive, or generic reasoning. In the examples below, red text marks the first substantive departure from a valid derivation or from the correct answer-option mapping.

\begingroup
\raggedbottom

\begin{caseeditorial}
\caseheading{1}{Answer-Binding Failure After Correct Calculation}{Math}

\begin{caseprompt}
\qkey{Prompt.} An item costs \$89. It is discounted by 30\%, then a 12\%
sales tax is applied. What is the final price?\quad
\qkey{Choices.} A. \$99.68; B. \$69.78; C. \$62.30; D. \$71.\quad
\qkey{Gold.} B
\end{caseprompt}

\begin{casecompare}
\casegood{SR-OPSD (w. ref.)}{A 30\% discount reduces \$89 to \$62.30.
Applying 12\% tax gives \$62.30 + \$7.48 = \$69.78.
\answerline{B}}{Correct}
\casewarn{SR-OPSD (wo. ref.)}{The final price is approximately \$69.78.
However, \bad{Option B is not sufficiently aligned with the calculation}, and
\bad{only Option D accounts for both discount and tax}.\answerline{\bad{D}}}{Answer drift}
\casefail{SDPO (RKL)}{The calculation gives \$62.3 * (1 + 0.12) = \$69.78,
but the final option changes.\answerline{\bad{D}}}{Binding failure}
\end{casecompare}

\begin{casefinding}
Both models reached the correct value but failed to bind it to the
correct option; model from SR-OPSD (w. ref.) preserved calculation--answer alignment.
\end{casefinding}
\end{caseeditorial}

\begin{caseeditorial}
\caseheading{2}{Nuclear-Decay Reasoning Stability}{Physics}

\begin{caseprompt}
\qkey{Prompt.} An element emits one alpha particle, followed by two beta
particles. How does its atomic number change?\quad
\qkey{Choices.} A. zero; B. $-1$; C. $-2$; D. $+1$.\quad
\qkey{Gold.} A
\end{caseprompt}

\begin{casecompare}
\casegood{SR-OPSD (w. ref.)}{Alpha emission reduces the atomic number by 2;
two beta emissions increase it by 2. The net change is $-2+2=0$.
\answerline{A}}{Faithful}
\casewarn{SR-OPSD (wo. ref.)}{The response states the correct local rules but
claims \bad{the total effect is not zero} and selects Option D.
\answerline{\bad{D}}}{Conclusion drift}
\casefail{SDPO (RKL)}{It incorrectly states that \bad{alpha emission decreases
the atomic number by 1} and retains the wrong conclusion.
\answerline{\bad{D}}}{Rule collapse}
\end{casecompare}

\begin{casefinding}
Without self-reference, the conclusion drifts despite correct local rules;
reverse KL additionally corrupts the underlying decay rule.
\end{casefinding}
\end{caseeditorial}

\newpage

\begin{caseeditorial}
\caseheading{3}{Self-Contradiction in Anatomical Knowledge}{Anatomy}

\begin{caseprompt}
\qkey{Prompt.} In the spinal cord, motor neuron cell bodies are located in
which lamina?\quad
\qkey{Choices.} A. I; B. II; C. V; D. IX.\quad
\qkey{Gold.} D
\end{caseprompt}

\begin{casecompare}
\casegood{SR-OPSD (w. ref.)}{Motor neuron cell bodies are located in lamina IX,
within the ventral horn responsible for voluntary movement.
\answerline{D}}{Stable}
\casewarn{SR-OPSD (wo. ref.)}{The response identifies the ventral horn but then
claims \bad{lamina IX is not part of the motor system} and selects lamina I.
\answerline{\bad{A}}}{Contradiction}
\casefail{SDPO (RKL)}{It gives a generic statement about spinal laminae and
claims \bad{motor neurons are located in lamina II}.\answerline{\bad{B}}}{Hallucination}
\end{casecompare}

\begin{casefinding}
SR-OPSD (w. ref.) preserves the correct anatomical entity; the alternatives
contradict it or replace it with a generic claim.
\end{casefinding}
\end{caseeditorial}

\begin{caseeditorial}
\caseheading{4}{Repetition and Incorrect Option Selection}{Nutrition}

\begin{caseprompt}
\qkey{Prompt.} How many kcal are there in one gram of ethanol?\quad
\qkey{Choices.} A. 5.8; B. 4.1; C. 7.1; D. 8.1 kcal/g.\quad
\qkey{Gold.} C
\end{caseprompt}

\begin{casecompare}
\casegood{SR-OPSD (w. ref.)}{Ethanol provides approximately 7.1 kcal per gram,
matching Option C.\answerline{C}}{Correct}
\casewarn{SR-OPSD (wo. ref.)}{It claims \bad{Option A (5.8 kcal/g) is the most
scientifically accurate choice} and that Option C overestimates the energy.
\answerline{\bad{A}}}{Plausible error}
\casefail{SDPO (RKL)}{It repeats the clue \bad{29.7 kJ per gram} but does not
convert it consistently.\answerline{\bad{B}}}{Repetitive mismatch}
\end{casecompare}

\begin{casefinding}
The alternatives either prefer a plausible wrong value or repeat the correct
clue without binding it to the final answer.
\end{casefinding}
\end{caseeditorial}

\begin{caseeditorial}
\caseheading{5}{Template Collapse on a Complex Scientific Question}{Chemistry}

\begin{caseprompt}
\qkey{Prompt.} What is the molar weight of a complex peptide-like molecule
specified by a long IUPAC name?\quad
\qkey{Choices.} A. 1517.700; B. 1589.900; C. 1536.800; D. 1490.700.\quad
\qkey{Gold.} C
\end{caseprompt}

\begin{casecompare}
\casegood{SR-OPSD (w. ref.)}{Accounting for all atoms gives the molar weight
represented by Option C (1536.800).\answerline{C}}{Correct}
\casewarn{SR-OPSD (wo. ref.)}{It discusses the functional groups but concludes
that \bad{Option A is most accurate} and \bad{Option C is too high}.
\answerline{\bad{A}}}{Option drift}
\casefail{SDPO (RKL)}{It begins with atomic-weight summation but drifts to the
irrelevant claim that \bad{the groups indicate a hydrophilic molecule}.
\answerline{\bad{D}}}{Template collapse}
\end{casecompare}

\begin{casefinding}
SR-OPSD (w. ref.) remains aligned with molar-weight estimation; the alternatives
drift to a nearby option or a generic chemistry template.
\end{casefinding}
\end{caseeditorial}

\endgroup

\clearpage
\section{Theoretical Results and Proofs}

\subsection{Proof of Proposition~\ref{prop:opsd_kl_regularized_equivalence}}

\begin{proof}
Fix $h$, $h_{\mathrm{sf}}$, and $h_{\mathrm{ref}}$, and abbreviate
\[
q(a):=\pi_\theta(a\mid h),
\qquad
p(a):=\operatorname{sg}\!\left[
\pi_{\bar\theta}(a\mid h_{\mathrm{sf}})
\right],
\qquad
r(a):=\pi_{\mathrm{ref}}(a\mid h_{\mathrm{ref}}).
\]
The conditional objective expands as
\begin{align*}
\mathcal J_\alpha
&=
\sum_a q(a)
\left[
\alpha\log\frac{p(a)}{r(a)}
-
\log\frac{q(a)}{r(a)}
\right]
\\
&=
\sum_a q(a)
\left[
-\log q(a)
+
\alpha\log p(a)
+
(1-\alpha)\log r(a)
\right].
\end{align*}
By definition,
\[
\log\pi_\alpha^\star(a\mid h_{\mathrm{sf}},h_{\mathrm{ref}})
=
\alpha\log p(a)
+
(1-\alpha)\log r(a)
-
\log Z_\alpha(h_{\mathrm{sf}},h_{\mathrm{ref}}).
\]
Substituting this identity yields
\begin{align*}
\mathcal J_\alpha
&=
\sum_a q(a)
\left[
-\log q(a)
+
\log\pi_\alpha^\star(a\mid h_{\mathrm{sf}},h_{\mathrm{ref}})
+
\log Z_\alpha(h_{\mathrm{sf}},h_{\mathrm{ref}})
\right]
\\
&=
-
D_{\mathrm{KL}}\!\left(
\pi_\theta(\cdot\mid h)
\,\middle\|\,
\pi_\alpha^\star(\cdot\mid h_{\mathrm{sf}},h_{\mathrm{ref}})
\right)
+
\log Z_\alpha(h_{\mathrm{sf}},h_{\mathrm{ref}}),
\end{align*}
where the last step uses $\sum_a q(a)=1$. Since the contexts and target
components are fixed, $Z_\alpha$ is constant with respect to the inner
optimization variable $\theta$.

When $\alpha=1$,
\[
Z_1(h_{\mathrm{sf}},h_{\mathrm{ref}})
=
\sum_a p(a)
=
1
\]
and
\[
\pi_1^\star(a\mid h_{\mathrm{sf}},h_{\mathrm{ref}})
=
p(a)
=
\operatorname{sg}\!\left[
\pi_{\bar\theta}(a\mid h_{\mathrm{sf}})
\right].
\]
The objective therefore reduces to the reverse-KL OPSD objective. The
constancy statement is conditional on the fixed contexts: after averaging over
a context distribution that itself changes with the policy, the expected
normalizer need not be constant.
\end{proof}

\subsection{Proof of Proposition
\ref{prop:opsd_forward_renyi_regularized_equivalence}}
\label{appendix:proof_of_opsd_forward_renyi_regularized_equivalence}

\begin{proof}
Fix $h$, $h_{\mathrm{sf}}$, and $h_{\mathrm{ref}}$. To simplify the
notation, define
\begin{align*}
q(a)
&:=
\pi_\theta(a\mid h),
\\
p(a)
&:=
\operatorname{sg}\!\left[
\pi_{\bar\theta}(a\mid h_{\mathrm{sf}})
\right],
\\
r(a)
&:=
\pi_{\mathrm{ref}}(a\mid h_{\mathrm{ref}}).
\end{align*}
Because all token probabilities are strictly positive, all logarithms,
probability ratios, and real-valued powers appearing below are well-defined.

Under these abbreviations, the normalizing constant is
\begin{align*}
Z_\alpha:=
Z_\alpha(h_{\mathrm{sf}},h_{\mathrm{ref}})=\sum_{a\in\mathcal V}
p(a)^\alpha r(a)^{1-\alpha}.
\end{align*}
The normalized self-referenced target satisfies
\begin{align*}
\pi_\alpha^\star
(a\mid h_{\mathrm{sf}},h_{\mathrm{ref}}) =
\frac{
p(a)^\alpha r(a)^{1-\alpha}
}{
Z_\alpha
}.
\end{align*}
Multiplying both sides of above equation by $Z_\alpha$ gives
\begin{align}
p(a)^\alpha r(a)^{1-\alpha}
=
Z_\alpha
\pi_\alpha^\star
(a\mid h_{\mathrm{sf}},h_{\mathrm{ref}}).
\label{eq:proof_unnormalized_target_identity}
\end{align}

We first simplify the numerator of $\mathcal{R}_{\alpha, \rho}$.
By the definition of the pseudo-reward $r_\alpha$,
\begin{align*}
r_\alpha(a;h_{\mathrm{sf}},h_{\mathrm{ref}})
=
\alpha\log\frac{p(a)}{r(a)}.
\end{align*}
Multiplying both sides by $\rho$ yields
\begin{align}
\rho
r_\alpha(a;h_{\mathrm{sf}},h_{\mathrm{ref}})
=
\rho\alpha
\log\frac{p(a)}{r(a)}.
\label{eq:proof_scaled_pseudo_reward}
\end{align}
Exponentiating both sides of
\Cref{eq:proof_scaled_pseudo_reward} gives
\begin{align}
\exp\!\left(
\rho
r_\alpha(a;h_{\mathrm{sf}},h_{\mathrm{ref}})
\right)
&=
\exp\!\left(
\rho\alpha
\log\frac{p(a)}{r(a)}
\right)
\nonumber\\
&=
\left(
\frac{p(a)}{r(a)}
\right)^{\rho\alpha}
\nonumber\\
&=
p(a)^{\rho\alpha}
r(a)^{-\rho\alpha}.
\label{eq:proof_exponentiated_pseudo_reward}
\end{align}

Consider one summand in the numerator of $\mathcal R_{\alpha,\rho}$.
Using \Cref{eq:proof_exponentiated_pseudo_reward}, we obtain
\begin{align}
&r(a)^\rho q(a)^{1-\rho}
\exp\!\left(
\rho
r_\alpha(a;h_{\mathrm{sf}},h_{\mathrm{ref}})
\right)
\nonumber\\
&=
r(a)^\rho q(a)^{1-\rho}
p(a)^{\rho\alpha}
r(a)^{-\rho\alpha}.
\label{eq:proof_numerator_step_one}
\end{align}
Combining the two powers of $r(a)$ gives
\begin{align}
r(a)^\rho r(a)^{-\rho\alpha}
&=
r(a)^{\rho-\rho\alpha}
\nonumber\\
&=
r(a)^{\rho(1-\alpha)}.
\label{eq:proof_reference_power}
\end{align}
Substituting \Cref{eq:proof_reference_power} into
\Cref{eq:proof_numerator_step_one} yields
\begin{align}
&r(a)^\rho q(a)^{1-\rho}
\exp\!\left(
\rho
r_\alpha(a;h_{\mathrm{sf}},h_{\mathrm{ref}})
\right)
\nonumber\\
&=
q(a)^{1-\rho}
p(a)^{\rho\alpha}
r(a)^{\rho(1-\alpha)}.
\label{eq:proof_numerator_step_two}
\end{align}
Therefore, \Cref{eq:proof_numerator_step_two} becomes
\begin{align}
&r(a)^\rho q(a)^{1-\rho}
\exp\!\left(
\rho
r_\alpha(a;h_{\mathrm{sf}},h_{\mathrm{ref}})
\right)
\nonumber\\
&=
q(a)^{1-\rho}
\left[
p(a)^\alpha r(a)^{1-\alpha}
\right]^\rho.
\label{eq:proof_numerator_step_three}
\end{align}
Applying \Cref{eq:proof_unnormalized_target_identity} to
\Cref{eq:proof_numerator_step_three} gives
\begin{align}
&r(a)^\rho q(a)^{1-\rho}
\exp\!\left(
\rho
r_\alpha(a;h_{\mathrm{sf}},h_{\mathrm{ref}})
\right)
\nonumber\\
&=
q(a)^{1-\rho}
\left[
Z_\alpha
\pi_\alpha^\star
(a\mid h_{\mathrm{sf}},h_{\mathrm{ref}})
\right]^\rho
\nonumber\\
&=
Z_\alpha^\rho
\pi_\alpha^\star
(a\mid h_{\mathrm{sf}},h_{\mathrm{ref}})^\rho
q(a)^{1-\rho}.
\label{eq:proof_numerator_step_four}
\end{align}

Summing both sides of \Cref{eq:proof_numerator_step_four} over
$a\in\mathcal V$ gives
\begin{align}
&\sum_{a\in\mathcal V}
r(a)^\rho q(a)^{1-\rho}
\exp\!\left(
\rho
r_\alpha(a;h_{\mathrm{sf}},h_{\mathrm{ref}})
\right)
\nonumber\\
&=
\sum_{a\in\mathcal V}
Z_\alpha^\rho
\pi_\alpha^\star
(a\mid h_{\mathrm{sf}},h_{\mathrm{ref}})^\rho
q(a)^{1-\rho}.
\label{eq:proof_summed_numerator_step_one}
\end{align}
Because $Z_\alpha$ does not depend on the summation variable $a$, it can
be taken outside the sum:
\begin{align}
&\sum_{a\in\mathcal V}
r(a)^\rho q(a)^{1-\rho}
\exp\!\left(
\rho
r_\alpha(a;h_{\mathrm{sf}},h_{\mathrm{ref}})
\right)
\nonumber\\
&=
Z_\alpha^\rho
\sum_{a\in\mathcal V}
\pi_\alpha^\star
(a\mid h_{\mathrm{sf}},h_{\mathrm{ref}})^\rho
q(a)^{1-\rho}.
\label{eq:proof_summed_numerator_step_two}
\end{align}

By the definition of R\'enyi divergence,
\begin{align}
&D_\rho\!\left(
\pi_\alpha^\star(\cdot\mid h_{\mathrm{sf}},h_{\mathrm{ref}})
\,\middle\|\,
q
\right)
\nonumber\\
&=
\frac{1}{\rho-1}
\log
\sum_{a\in\mathcal V}
\pi_\alpha^\star
(a\mid h_{\mathrm{sf}},h_{\mathrm{ref}})^\rho
q(a)^{1-\rho}.
\label{eq:proof_target_renyi_definition}
\end{align}
Multiplying both sides of
\Cref{eq:proof_target_renyi_definition} by $\rho-1$ gives
\begin{align}
&(\rho-1)
D_\rho\!\left(
\pi_\alpha^\star(\cdot\mid h_{\mathrm{sf}},h_{\mathrm{ref}})
\,\middle\|\,
q
\right)
\nonumber\\
&=
\log
\sum_{a\in\mathcal V}
\pi_\alpha^\star
(a\mid h_{\mathrm{sf}},h_{\mathrm{ref}})^\rho
q(a)^{1-\rho}.
\label{eq:proof_target_renyi_log}
\end{align}
Exponentiating both sides of
\Cref{eq:proof_target_renyi_log} gives
\begin{align}
&\exp\!\left(
(\rho-1)
D_\rho\!\left(
\pi_\alpha^\star(\cdot\mid h_{\mathrm{sf}},h_{\mathrm{ref}})
\,\middle\|\,
q
\right)
\right)
\nonumber\\
&=
\sum_{a\in\mathcal V}
\pi_\alpha^\star
(a\mid h_{\mathrm{sf}},h_{\mathrm{ref}})^\rho
q(a)^{1-\rho}.
\label{eq:proof_target_renyi_exponential}
\end{align}
Substituting \Cref{eq:proof_target_renyi_exponential} into
\Cref{eq:proof_summed_numerator_step_two} yields
\begin{align}
&\sum_{a\in\mathcal V}
r(a)^\rho q(a)^{1-\rho}
\exp\!\left(
\rho
r_\alpha(a;h_{\mathrm{sf}},h_{\mathrm{ref}})
\right)
\nonumber\\
&=
Z_\alpha^\rho
\exp\!\left(
(\rho-1)
D_\rho\!\left(
\pi_\alpha^\star(\cdot\mid h_{\mathrm{sf}},h_{\mathrm{ref}})
\,\middle\|\,
q
\right)
\right).
\label{eq:proof_final_numerator}
\end{align}

We next simplify the denominator of $\mathcal R_{\alpha,\rho}$.
By the definition of R\'enyi divergence,
\begin{align}
D_\rho(q\|r)
=
\frac{1}{\rho-1}
\log
\sum_{a\in\mathcal V}
q(a)^\rho r(a)^{1-\rho}.
\label{eq:proof_reference_renyi_definition}
\end{align}
Multiplying both sides of
\Cref{eq:proof_reference_renyi_definition} by $\rho-1$ gives
\begin{align}
(\rho-1)D_\rho(q\|r)
=
\log
\sum_{a\in\mathcal V}
q(a)^\rho r(a)^{1-\rho}.
\label{eq:proof_reference_renyi_log}
\end{align}
Exponentiating both sides of
\Cref{eq:proof_reference_renyi_log} gives
\begin{align}
\exp\!\left(
(\rho-1)D_\rho(q\|r)
\right)
=
\sum_{a\in\mathcal V}
q(a)^\rho r(a)^{1-\rho}.
\label{eq:proof_final_denominator}
\end{align}

Using \Cref{eq:proof_final_numerator,eq:proof_final_denominator} in the
definition of $\mathcal R_{\alpha,\rho}$ gives
\begin{align}
&\mathcal R_{\alpha,\rho}
(\theta;h,h_{\mathrm{sf}},h_{\mathrm{ref}})
\nonumber\\
&=
-\frac{1}{\rho-1}
\log
\frac{
Z_\alpha^\rho
\exp\!\left(
(\rho-1)
D_\rho\!\left(
\pi_\alpha^\star(\cdot\mid h_{\mathrm{sf}},h_{\mathrm{ref}})
\,\middle\|\,
q
\right)
\right)
}{
\exp\!\left(
(\rho-1)D_\rho(q\|r)
\right)
}.
\label{eq:proof_substitute_power_sums}
\end{align}
We then obtain
\begin{align}
&\mathcal R_{\alpha,\rho}
(\theta;h,h_{\mathrm{sf}},h_{\mathrm{ref}})
\nonumber\\
&=
-\frac{1}{\rho-1}
\Bigg[
\log\!\left(
Z_\alpha^\rho
\exp\!\left(
(\rho-1)
D_\rho\!\left(
\pi_\alpha^\star(\cdot\mid h_{\mathrm{sf}},h_{\mathrm{ref}})
\,\middle\|\,
q
\right)
\right)
\right)
\nonumber\\
&\hspace{4.8cm}
-
\log\!\left(
\exp\!\left(
(\rho-1)D_\rho(q\|r)
\right)
\right)
\Bigg].
\label{eq:proof_expand_log_ratio}
\end{align}
We then have
\begin{align*}
\mathcal R_{\alpha,\rho}
(\theta;h,h_{\mathrm{sf}},h_{\mathrm{ref}})
=
-\frac{1}{\rho-1}
\Bigg[
\log Z_\alpha^\rho
+
\log\!\left(
\exp\!\left(
(\rho-1)
D_\rho\!\left(
\pi_\alpha^\star(\cdot\mid h_{\mathrm{sf}},h_{\mathrm{ref}})
\,\middle\|\,
q
\right)
\right)
\right)
-
\log\!\left(
\exp\!\left(
(\rho-1)D_\rho(q\|r)
\right)
\right)
\Bigg].
\end{align*}

We then have
\begin{align*}
\mathcal R_{\alpha,\rho}
(\theta;h,h_{\mathrm{sf}},h_{\mathrm{ref}})
=
-\frac{1}{\rho-1}
\Bigg[
\rho\log Z_\alpha
+
(\rho-1)
D_\rho\!\left(
\pi_\alpha^\star(\cdot\mid h_{\mathrm{sf}},h_{\mathrm{ref}})
\,\middle\|\,
q
\right)
-
(\rho-1)D_\rho(q\|r)
\Bigg].
\end{align*}

Then, we obtain
\begin{align*}
\mathcal R_{\alpha,\rho}
(\theta;h,h_{\mathrm{sf}},h_{\mathrm{ref}})
=
-\frac{\rho}{\rho-1}\log Z_\alpha
-
D_\rho\!\left(
\pi_\alpha^\star(\cdot\mid h_{\mathrm{sf}},h_{\mathrm{ref}})
\,\middle\|\,
q
\right)
+
D_\rho(q\|r).
\end{align*}

By definition,
\begin{align*}
\mathcal J_{\alpha,\rho}
(\theta;h,h_{\mathrm{sf}},h_{\mathrm{ref}})
=
\mathcal R_{\alpha,\rho}
(\theta;h,h_{\mathrm{sf}},h_{\mathrm{ref}})
-
D_\rho(q\|r).
\end{align*}
The $D_\rho(q\|r)$ terms cancel, so we have
\begin{align*}
&\mathcal J_{\alpha,\rho}
(\theta;h,h_{\mathrm{sf}},h_{\mathrm{ref}})
=-
D_\rho\!\left(
\pi_\alpha^\star(\cdot\mid h_{\mathrm{sf}},h_{\mathrm{ref}})
\,\middle\|\,
q
\right)
-
\frac{\rho}{\rho-1}\log Z_\alpha.
\end{align*}

Finally, restoring
\[
q=\pi_\theta(\cdot\mid h)
\]
and
\[
Z_\alpha
=
Z_\alpha(h_{\mathrm{sf}},h_{\mathrm{ref}}),
\]
we obtain
\begin{align*}
\mathcal J_{\alpha,\rho}
(\theta;h,h_{\mathrm{sf}},h_{\mathrm{ref}})
=-
D_\rho\!\left(
\pi_\alpha^\star(\cdot\mid h_{\mathrm{sf}},h_{\mathrm{ref}})
\,\middle\|\,
\pi_\theta(\cdot\mid h)
\right)
-
\frac{\rho}{\rho-1}
\log
Z_\alpha(h_{\mathrm{sf}},h_{\mathrm{ref}}).
\end{align*}

For fixed $h$, $h_{\mathrm{sf}}$, and $h_{\mathrm{ref}}$, both
$p(a)$ and $r(a)$ are frozen during the inner student update. Therefore,
\begin{align*}
Z_\alpha(h_{\mathrm{sf}},h_{\mathrm{ref}})
=
\sum_{a\in\mathcal V}
p(a)^\alpha r(a)^{1-\alpha}
\end{align*}
does not depend on the inner optimization variable $\theta$. Consequently,
\[
-\frac{\rho}{\rho-1}
\log Z_\alpha(h_{\mathrm{sf}},h_{\mathrm{ref}})
\]
is constant with respect to $\theta$. It follows that
\begin{align*}
\argmax_\theta\,
\mathcal J_{\alpha,\rho}
(\theta;h,h_{\mathrm{sf}},h_{\mathrm{ref}})
&=
\argmax_\theta
\left[
-
D_\rho\!\left(
\pi_\alpha^\star(\cdot\mid h_{\mathrm{sf}},h_{\mathrm{ref}})
\,\middle\|\,
\pi_\theta(\cdot\mid h)
\right)
\right]
\nonumber\\
&=
\argmin_\theta
D_\rho\!\left(
\pi_\alpha^\star(\cdot\mid h_{\mathrm{sf}},h_{\mathrm{ref}})
\,\middle\|\,
\pi_\theta(\cdot\mid h)
\right).
\end{align*}
This proves the stated conditional variational equivalence.

When $\alpha=1$,
\begin{align*}
Z_1(h_{\mathrm{sf}},h_{\mathrm{ref}})
&=
\sum_{a\in\mathcal V}p(a)=
1,
\end{align*}
and
\begin{align*}
\pi_1^\star(a\mid h_{\mathrm{sf}},h_{\mathrm{ref}})
= \operatorname{sg}\!\left[
\pi_{\bar\theta}(a\mid h_{\mathrm{sf}})
\right].
\end{align*}
Hence,
\begin{align*}
&\mathcal J_{1,\rho}
(\theta;h,h_{\mathrm{sf}},h_{\mathrm{ref}})
\nonumber\\
&=
-
D_\rho\!\left(
\operatorname{sg}\!\left[
\pi_{\bar\theta}(\cdot\mid h_{\mathrm{sf}})
\right]
\,\middle\|\,
\pi_\theta(\cdot\mid h)
\right).
\end{align*}
\end{proof}

\subsection{Proof of Proposition~\ref{prop:renyi_logit_gradient}}
\begin{proof}
By the definition of R\'enyi divergence,
\begin{align}
&D_\rho\!\left(
\pi_\alpha^\star(\cdot\mid h_{\mathrm{sf}},h_{\mathrm{ref}})
\,\middle\|\,
\pi_\theta(\cdot\mid h)
\right)
\nonumber\\
&=
\frac{1}{\rho-1}
\log
\sum_{b\in\mathcal V}
\pi_\alpha^\star(b\mid h_{\mathrm{sf}},h_{\mathrm{ref}})^\rho
\pi_\theta(b\mid h)^{1-\rho}.
\label{eq:renyi_gradient_start}
\end{align}
Define
\begin{align}
S_{\alpha,\rho,\theta}
:=
\sum_{b\in\mathcal V}
\pi_\alpha^\star(b\mid h_{\mathrm{sf}},h_{\mathrm{ref}})^\rho
\pi_\theta(b\mid h)^{1-\rho}.
\label{eq:renyi_gradient_S}
\end{align}
Then
\begin{align*}
D_\rho
=
\frac{1}{\rho-1}\log S_{\alpha,\rho,\theta}.
\end{align*}

We first compute the derivative of the student probability with respect to
the student logit. By the softmax definition,
\begin{align*}
\pi_\theta(b\mid h)
=
\frac{
\exp(z_\theta(b\mid h))
}{
\sum_{c\in\mathcal V}\exp(z_\theta(c\mid h))
}.
\end{align*}
Therefore,
\begin{align}
\frac{\partial\pi_\theta(b\mid h)}
{\partial z_\theta(a\mid h)}
&=
\frac{
\mathbf 1\{a=b\}\exp(z_\theta(b\mid h))
}{
\sum_c\exp(z_\theta(c\mid h))
}
\nonumber\\
&\quad-
\frac{
\exp(z_\theta(b\mid h))
\exp(z_\theta(a\mid h))
}{
\left(\sum_c\exp(z_\theta(c\mid h))\right)^2
}
\nonumber\\
&=
\pi_\theta(b\mid h)
\left[
\mathbf 1\{a=b\}
-
\pi_\theta(a\mid h)
\right].
\label{eq:softmax_derivative}
\end{align}

Since $\pi_\alpha^\star$ is frozen during the student update,
only $\pi_\theta$ depends on $z_\theta$. Hence,
\begin{align*}
\frac{\partial S_{\alpha,\rho,\theta}}
{\partial z_\theta(a\mid h)}
&=
\sum_{b\in\mathcal V}
\pi_\alpha^\star(b\mid h_{\mathrm{sf}},h_{\mathrm{ref}})^\rho
\frac{\partial
\pi_\theta(b\mid h)^{1-\rho}}
{\partial z_\theta(a\mid h)}
\nonumber\\
&=
(1-\rho)
\sum_{b\in\mathcal V}
\pi_\alpha^\star(b\mid h_{\mathrm{sf}},h_{\mathrm{ref}})^\rho
\pi_\theta(b\mid h)^{-\rho}
\frac{\partial\pi_\theta(b\mid h)}
{\partial z_\theta(a\mid h)}.
\end{align*}
Substituting \Cref{eq:softmax_derivative} gives
\begin{align*}
\frac{\partial S_{\alpha,\rho,\theta}}
{\partial z_\theta(a\mid h)}
&=
(1-\rho)
\sum_{b\in\mathcal V}
\pi_\alpha^\star(b\mid h_{\mathrm{sf}},h_{\mathrm{ref}})^\rho
\pi_\theta(b\mid h)^{1-\rho}
\nonumber\\
&\qquad\times
\left[
\mathbf 1\{a=b\}
-
\pi_\theta(a\mid h)
\right].
\end{align*}
Separating the two terms,
\begin{align}
\frac{\partial S_{\alpha,\rho,\theta}}
{\partial z_\theta(a\mid h)}
&=
(1-\rho)
\pi_\alpha^\star(a\mid h_{\mathrm{sf}},h_{\mathrm{ref}})^\rho
\pi_\theta(a\mid h)^{1-\rho}
\nonumber\\
&\quad-
(1-\rho)\pi_\theta(a\mid h)
\sum_{b\in\mathcal V}
\pi_\alpha^\star(b\mid h_{\mathrm{sf}},h_{\mathrm{ref}})^\rho
\pi_\theta(b\mid h)^{1-\rho}
\nonumber\\
&=
(1-\rho)
\left[
\pi_\alpha^\star(a\mid h_{\mathrm{sf}},h_{\mathrm{ref}})^\rho
\pi_\theta(a\mid h)^{1-\rho}
-
\pi_\theta(a\mid h)S_{\alpha,\rho,\theta}
\right].
\label{eq:derivative_S}
\end{align}

Next, differentiating the R\'enyi divergence gives
\begin{align*}
\frac{\partial D_\rho}
{\partial z_\theta(a\mid h)}
&=
\frac{1}{\rho-1}
\frac{1}{S_{\alpha,\rho,\theta}}
\frac{\partial S_{\alpha,\rho,\theta}}
{\partial z_\theta(a\mid h)}.
\end{align*}
Substituting \Cref{eq:derivative_S},
\begin{align*}
\frac{\partial D_\rho}
{\partial z_\theta(a\mid h)}
&=
\frac{1-\rho}{\rho-1}
\left[
\frac{
\pi_\alpha^\star(a\mid h_{\mathrm{sf}},h_{\mathrm{ref}})^\rho
\pi_\theta(a\mid h)^{1-\rho}
}{
S_{\alpha,\rho,\theta}
}
-
\pi_\theta(a\mid h)
\right].
\end{align*}
Since
\[
\frac{1-\rho}{\rho-1}=-1,
\]
we obtain
\begin{align*}
\frac{\partial D_\rho}
{\partial z_\theta(a\mid h)}
&=
\pi_\theta(a\mid h)
-
\frac{
\pi_\alpha^\star(a\mid h_{\mathrm{sf}},h_{\mathrm{ref}})^\rho
\pi_\theta(a\mid h)^{1-\rho}
}{
S_{\alpha,\rho,\theta}
}
\nonumber\\
&=
\pi_\theta(a\mid h)
-
\widetilde{\pi}_{\alpha,\rho,\theta}
(a\mid h,h_{\mathrm{sf}},h_{\mathrm{ref}}) .
\end{align*}

Finally, by definition,
\begin{align*}
\pi_\alpha^\star(a\mid h_{\mathrm{sf}},h_{\mathrm{ref}})
=
\frac{
\operatorname{sg}\!\left[
\pi_{\bar\theta}(a\mid h_{\mathrm{sf}})
\right]^\alpha
\pi_{\mathrm{ref}}(a\mid h_{\mathrm{ref}})^{1-\alpha}
}{
Z_\alpha(h_{\mathrm{sf}},h_{\mathrm{ref}})
}.
\end{align*}
Raising both sides to the power $\rho$ gives
\begin{align*}
\pi_\alpha^\star(a\mid h_{\mathrm{sf}},h_{\mathrm{ref}})^\rho
=
\frac{
\operatorname{sg}\!\left[
\pi_{\bar\theta}(a\mid h_{\mathrm{sf}})
\right]^{\alpha\rho}
\pi_{\mathrm{ref}}(a\mid h_{\mathrm{ref}})^{(1-\alpha)\rho}
}{
Z_\alpha(h_{\mathrm{sf}},h_{\mathrm{ref}})^\rho
}.
\end{align*}
Then, the common factor
$Z_\alpha(h_{\mathrm{sf}},h_{\mathrm{ref}})^{-\rho}$ appears in both the
numerator and denominator and therefore cancels. Hence,
\begin{align*}
\widetilde{\pi}_{\alpha,\rho,\theta}
(a\mid h,h_{\mathrm{sf}},h_{\mathrm{ref}})
\propto
&
\operatorname{sg}\!\left[
\pi_{\bar\theta}(a\mid h_{\mathrm{sf}})
\right]^{\alpha\rho}
\nonumber\\
&\times
\pi_{\mathrm{ref}}(a\mid h_{\mathrm{ref}})^{(1-\alpha)\rho}
\pi_\theta(a\mid h)^{1-\rho}.
\end{align*}
\end{proof}

\clearpage
\section{Extra Experimental Details}

\subsection{Extra Results for Science Q\&A}

\begin{figure}[htp]
    \centering
    \includegraphics[width=1\linewidth]{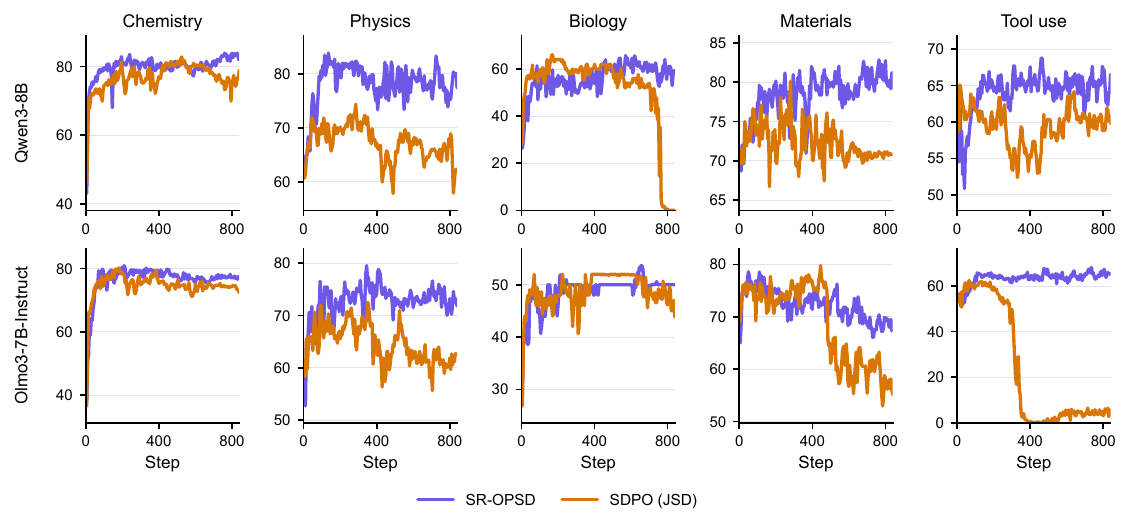}
    \caption{
\textbf{Maj@16 validation accuracy over training.}
We report Maj@16 validation accuracy across training steps on five reasoning-related benchmarks for Qwen3-8B and Olmo3-7B-Instruct.
}
    \label{fig: major acc on science}
\end{figure}

\begin{figure}[htp]
    \centering
    \includegraphics[width=1\linewidth]{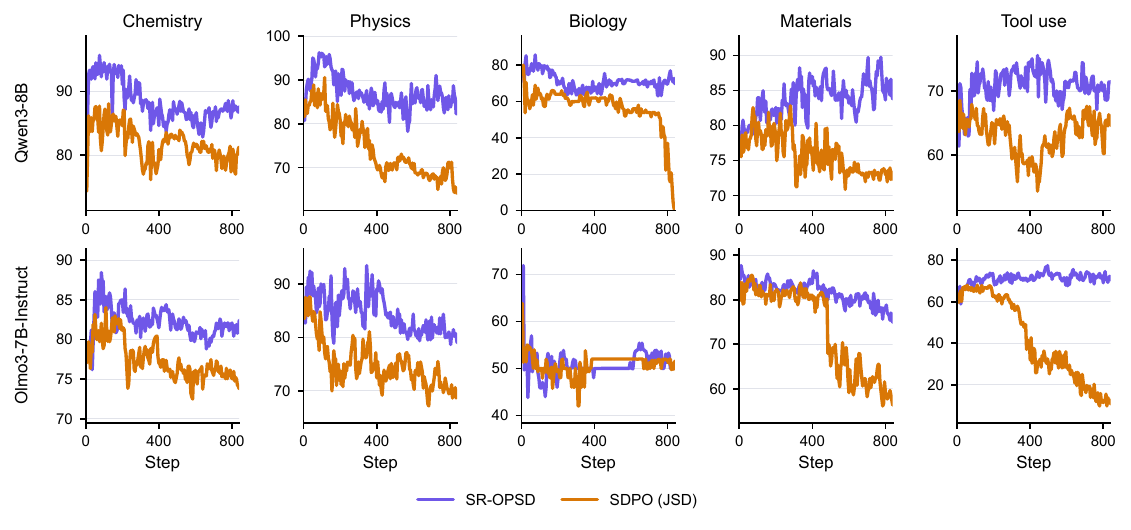}
    \caption{
\textbf{Best@16 validation accuracy over training.}
We report Best@16 validation accuracy across training steps on five reasoning-related benchmarks for Qwen3-8B and Olmo3-7B-Instruct.
}
    \label{fig: best acc on science}
\end{figure}

\subsection{Experimental Configurations}

\begin{table*}[htp]
\centering
\small
\caption{Hyperparameters for the Science Q\&A and mathematical-reasoning experiments. }
\label{tab: experiments configs for science and math tasks}
\resizebox{\textwidth}{!}{%
\begin{tabular}{lcccc}
\toprule
\textbf{Parameter} &
\shortstack{\textbf{SDPO (JSD)}\\\textbf{Science Q\&A}} &
\shortstack{\textbf{SR-OPSD (Ours)}\\\textbf{Science Q\&A}} &
\shortstack{\textbf{SDPO (RKL)}\\\textbf{Mathematics}} &
\shortstack{\textbf{SR-OPSD (Ours)}\\\textbf{Mathematics}} \\
\midrule

\multicolumn{5}{l}{\textbf{General}} \\
Benchmark & SciKnowEval & SciKnowEval & \multicolumn{2}{c}{AIME24, AIME25, HMMT25, AMC23, Minerva} \\
Model & Qwen3-8B & Qwen3-8B & Qwen3-4B-Instruct-2507 & Qwen3-4B-Instruct-2507 \\
Primary evaluation metrics & Avg@16 & Avg@16 & Avg@64 and Pass@64 & Avg@64 and Pass@64 \\
Evaluation thinking mode & Disabled & Disabled & Enabled & Enabled \\
\midrule

\multicolumn{5}{l}{\textbf{Data, rollout, and evaluation}} \\
Maximum prompt length & 2048 & 2048 & -- & -- \\
Maximum training response length & 8192 & 8192 & 16384 & 16384 \\
Question batch / PPO mini-batch size & 32 / 32 & 32 / 32 & -- & -- \\
Per-device batch / gradient accumulation & -- & -- & 1 / 1 & 1 / 1 \\
Training rollouts per question & 8 & 8 & 1 & 1 \\
Training temperature / top-$p$ / top-$k$ & 1.0 / 1.0 / $-1$ & 1.0 / 1.0 / $-1$ & 0.7 / 0.95 / 20 & 0.7 / 0.95 / 20 \\
Evaluation samples per question & 16 & 16 & 64 & 64 \\
Evaluation temperature / top-$p$ / top-$k$ & 0.6 / 0.95 / $-1$ & 0.6 / 0.95 / $-1$ & 0.7 / 0.95 / 20 & 0.7 / 0.95 / 20 \\
\midrule

\multicolumn{5}{l}{\textbf{Self-distillation objective}} \\
Projection objective & Jensen--Shannon & Forward R\'enyi & Reverse KL & Forward R\'enyi \\
R\'enyi order & -- & 0.95 & -- & 0.95 \\
Self-reference coefficient  & -- & 0.90 & -- & 0.90 \\
Frozen-reference anchoring & No & Yes & No & Yes \\
Reference policy & -- & Frozen initial policy & -- & Frozen initial policy \\
Top-$K$ distillation / tail bucket & 100 / Yes & 100 / Yes & 100 / No & 100 / No \\
Teacher EMA update rate & 0.05 & 0.05 & 0.05 & 0.05 \\
Distillation IS clip & 2.0 & 2.0 & -- & -- \\
Token-loss clip & -- & -- & 0.05 & 0.05 \\
\midrule

\multicolumn{5}{l}{\textbf{Optimization}} \\
Optimizer & AdamW & AdamW & AdamW & AdamW \\
Learning rate & $1\times10^{-5}$ & $1\times10^{-5}$ & $5\times10^{-6}$ & $5\times10^{-6}$ \\
Learning-rate schedule / warmup steps & constant / 10 & constant / 10 & linear / 0 & linear / 0 \\
Weight decay & 0.01 & 0.01 & 0 & 0 \\
Gradient clipping norm & 1.0 & 1.0 & 0.1 & 0.1 \\
\bottomrule
\end{tabular}%
}
\end{table*}

\begin{table*}[htp]
\centering
\small
\caption{Hyperparameters for the Qwen3 model-scaling experiments on LiveCodeBench v6. Training uses eight rollouts per prompt, whereas validation uses 16 sampled solutions per problem. }
\label{tab: experiments config for LCB task}
\resizebox{0.92\textwidth}{!}{%
\begin{tabular}{lccc}
\toprule
\textbf{Parameter} & \textbf{GRPO} & \textbf{SDPO} & \textbf{SR-OPSD (Ours)} \\
\midrule

\multicolumn{4}{l}{\textbf{General}} \\
Models & \multicolumn{3}{c}{Qwen3-0.6B, Qwen3-1.7B, Qwen3-4B, Qwen3-8B} \\
Benchmark & \multicolumn{3}{c}{LiveCodeBench v6} \\
Training and evaluation thinking mode & Disabled & Disabled & Disabled \\
\midrule

\multicolumn{4}{l}{\textbf{Data, rollout, and validation}} \\
Maximum prompt / response length & 2048 / 8192 & 2048 / 8192 & 2048 / 8192 \\
Question batch / PPO mini-batch size & 32 / 32 & 32 / 32 & 32 / 32 \\
Training rollouts per question & 8 & 8 & 8 \\
Training temperature / top-$p$ / top-$k$ & 1.0 / 1.0 / $-1$ & 1.0 / 1.0 / $-1$ & 1.0 / 1.0 / $-1$ \\
Validation frequency / samples per question & 5 / 16 & 5 / 16 & 5 / 16 \\
Validation temperature / top-$p$ / top-$k$ & 0.6 / 0.95 / $-1$ & 0.6 / 0.95 / $-1$ & 0.6 / 0.95 / $-1$ \\
\midrule

\multicolumn{4}{l}{\textbf{Method-specific objective}} \\
Policy/distillation objective & GRPO & Reverse KL & Forward R\'enyi \\
Environment feedback in reprompting & -- & Yes & Yes \\
R\'enyi order & -- & -- & 0.95 \\
Self-reference coefficient & -- & -- & 0.90 \\
Frozen-reference anchoring & -- & No & Yes \\
Reference policy & -- & -- & Frozen initial policy \\
Top-$K$ distillation / tail bucket & -- & 100 / Yes & 100 / Yes \\
Teacher EMA update rate & -- & 0.05 & 0.05 \\
Rollout IS correction / distillation IS clip & token / -- & token / 2.0 & token / 2.0 \\
\midrule

\multicolumn{4}{l}{\textbf{Optimization}} \\
Optimizer & AdamW & AdamW & AdamW \\
Learning rate / schedule & $1\times10^{-5}$ / constant & $1\times10^{-5}$ / constant & $1\times10^{-5}$ / constant \\
Warmup steps & 10 & 10 & 10 \\
Weight decay / gradient clipping norm & 0.01 / 1.0 & 0.01 / 1.0 & 0.01 / 1.0 \\
PPO epochs / loss aggregation & 1 / token mean & 1 / token mean & 1 / token mean \\
\bottomrule
\end{tabular}%
}
\end{table*}

\clearpage
\subsection{Extra Details of Ablation Study}

\begin{table}[htp]
\centering
\caption{Ablation Study on Science Q\&A Physics.}
\label{tab:ablation study }
\resizebox{\textwidth}{!}{%
\begin{tabular}{@{}llllcccc@{}}
\toprule
\multirow{2}{*}{\textbf{Model}} &
\multirow{2}{*}{\textbf{Method}} &
\multirow{2}{*}{\textbf{Reference}} &
\multirow{2}{*}{\boldmath$\rho$} &
\multicolumn{4}{c}{\textbf{Physics}} \\
\cmidrule(lr){5-8}
& & & & \textbf{Avg@16} & \textbf{Pass@16} &
\textbf{Majority@16} & \textbf{Format} \\
\midrule
\multirow{7}{*}{Qwen3-8B}
 & JSD & No  & --   & 79.4 & 88.3 & 81.7 & 99.8 \\
 & JSD & Yes & -- & 77.2 & 84.6 & 79.4 & 99.8 \\
 & FKL & No  & --   & 79.1 & 91.9 & 81.8 & 100.0 \\
 & FKL & Yes & -- & 76.3 & 82.3 & 78.3 & 100.0 \\
\cmidrule(l){2-8}
 & SR-OPSD (Ours) & Yes & 0.50 & 79.9 & 88.9 & 80.0 & 100.0 \\
 & SR-OPSD (Ours) & Yes & 0.70 & 80.6 & 82.4 & 81.5 & 100.0 \\
 & SR-OPSD (Ours) & Yes & 0.95 & 81.3 & 86.8 & 82.4 & 100.0 \\
\bottomrule
\end{tabular}
}
\end{table}

\end{document}